\documentclass{article}

\usepackage{microtype}
\usepackage{graphicx}
\usepackage{subcaption}
\usepackage{booktabs} 

\usepackage{hyperref}

\usepackage[accepted]{icml2026}

\usepackage{amsmath}
\usepackage{amssymb}
\usepackage{mathtools}
\usepackage{amsthm}
\usepackage{placeins}
\usepackage{etoolbox}

\usepackage[capitalize,noabbrev]{cleveref}

\theoremstyle{plain}
\newtheorem{theorem}{Theorem}[section]
\newtheorem{proposition}[theorem]{Proposition}
\newtheorem{lemma}[theorem]{Lemma}
\newtheorem{corollary}[theorem]{Corollary}
\theoremstyle{definition}

\newtheorem{assumption}[theorem]{Assumption}
\theoremstyle{remark}
\newtheorem{remark}[theorem]{Remark}

\usepackage[textsize=tiny]{todonotes}

\icmltitlerunning{A Risk Decomposition Framework for Pre-Hoc Fine-Tuning Prediction}

\begin{document}

\twocolumn[
  \icmltitle{A Risk Decomposition Framework for Pre-Hoc Fine-Tuning Prediction}



  \icmlsetsymbol{equal}{*}

\begin{icmlauthorlist}
    \icmlauthor{Yuxiang Luo}{hkustgz}
    \icmlauthor{Chen Wang}{hkustgz}
    \icmlauthor{Nan Tang}{hkustgz}
\end{icmlauthorlist}

\icmlaffiliation{hkustgz}{The Hong Kong University of Science and Technology (Guangzhou)}

\icmlcorrespondingauthor{Nan Tang}{nantang@hkust-gz.edu.cn}
 \icmlkeywords{ pre-hoc prediction, risk decomposition, uncertainty quantification, stochastic optimization, power-law dynamics, optimal stopping, budget-aware learning}

  \vskip 0.3in
]



\printAffiliationsAndNotice{}  

\begin{abstract}
The high cost of fine-tuning LLMs poses a significant economic barrier; pre-hoc performance prediction offers a critical solution to substantially reduce this expense. However, the theoretical limits of pre-hoc performance prediction remain unexplored.
We formulate it as a stochastic estimation problem under information constraints, decomposing prediction risk into two components: an \textbf{intrinsic limit} (static data-model compatibility) and a \textbf{reducible optimization variance}. We prove that optimization variance admits a necessary lower bound on its decay rate, implying fundamental constraints on how quickly uncertainty dissipates, regardless of the predictor used. Based on these dynamics, we derive a budget-optimal probing principle and introduce a predictability phase diagram that organizes tasks into three distinct regimes: Static-Sufficient, Dynamic-Critical, and Noise-Dominant. Extensive experiments on synthetic and real-world benchmarks validate these theoretical regimes and demonstrate the efficiency of our probing strategy.
\end{abstract}

\section{Introduction}

Fine-tuning Large Language Models (LLMs) has become a central paradigm for adapting foundation models to downstream tasks~\cite{zhang2025instructiontuninglargelanguage,han2024parameterefficientfinetuninglargemodels,zhu2022predictingfinetuningperformanceprobing,DBLP:journals/pvldb/LinQZPCTL25}. 
Despite its success, fine-tuning remains costly and uncertain: identical configurations can lead to substantially different outcomes due to interactions among pre-training priors, dataset characteristics, and stochastic optimization. 
In practice, large computational budgets are often spent only to yield limited gains, degraded performance, or catastrophic forgetting~\cite{luo2025empiricalstudycatastrophicforgetting,li2024revisitingcatastrophicforgettinglarge}, making trial-and-error fine-tuning increasingly impractical at scale.

This motivates the problem of \emph{pre-hoc fine-tuning prediction}: \textbf{estimating the final fine-tuning performance of a fine-tuning task before executing full training}. 
More concretely, given a pretrained model, a dataset, and an optimization algorithm, the goal is to predict the eventual outcome using only information available \emph{ prior to} or at the \emph{very early stages} of optimization. 
Such predictions directly inform compute allocation decisions, including whether to continue training, which configurations to prioritize, and how much budget to invest.

To address this challenge, recent work has explored estimating fine-tuning outcomes before committing full training resources, a setting commonly referred to as \emph{pre-hoc performance prediction}~\cite{zeng2025lensllmunveilingfinetuningdynamics,anugraha2024proxylmpredictinglanguagemodel,zeng2025pretrainingindicatorsreliablypredict,kuramoto-suzuki-2025-predicting}. 
Empirical approaches, including proxy-based methods~\cite{anugraha2024proxylmpredictinglanguagemodel} and early-stage probing frameworks~\cite{zhu2022predictingfinetuningperformanceprobing}, show that fine-tuning performance can be predicted to a non-trivial degree by combining static dataset statistics with short-horizon optimization signals. 
However, existing methods rely largely on heuristic designs and lack a principled framework for reasoning about \emph{prediction error}, \emph{uncertainty evolution}, and \emph{resource allocation}~\cite{zhu2022predictingfinetuningperformanceprobing,zeng2025pretrainingindicatorsreliablypredict}.

\begin{figure}[t!]
    \centering
    \includegraphics[width=\linewidth]{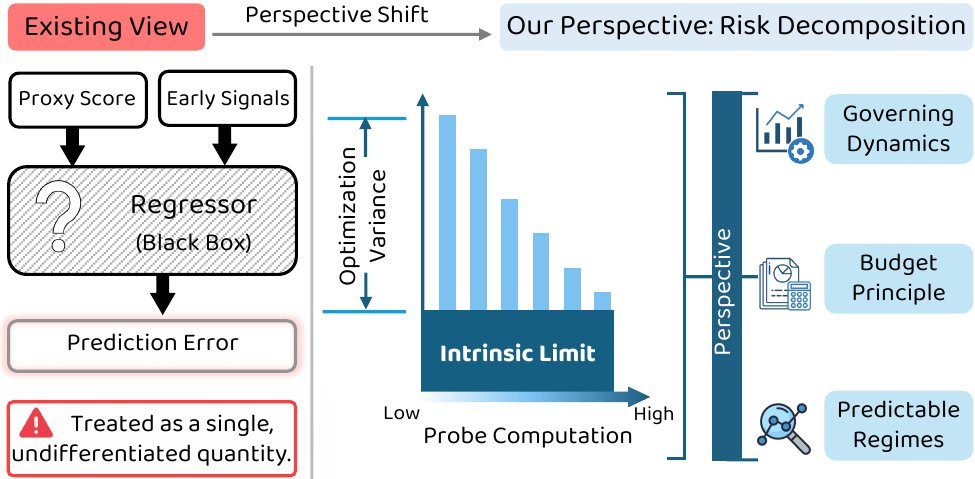}
    \caption{Perspective shift. We move from treating pre-hoc prediction as a black-box regression problem (Left) to a structural decomposition of risk into intrinsic limits and optimization variance (Right).}
    \label{fig:placeholder}
\end{figure}

Most existing predictors operate as black-box regressors, treating optimization dynamics as features optimized for correlation with final outcomes~\cite{anugraha2024proxylmpredictinglanguagemodel}. 
Within this view, prediction error is treated as a single undifferentiated quantity, and probing depth is handled as a discrete hyperparameter~\cite{Kweon_2025}. 
In contrast, we adopt a structural perspective grounded in risk decomposition (Figure~\ref{fig:placeholder})~\cite{kendall2017uncertaintiesneedbayesiandeep}. 
We argue that unpredictability in fine-tuning arises from two distinct sources~\cite{Belkin_2019,guo2025uncertaintyprofilesllmsuncertainty}: 
an intrinsic component determined by static data-model compatibility (the \emph{Intrinsic Limit}) and a computation-dependent component induced by stochastic optimization (the \emph{Optimization Variance})~\cite{mosbach2021stabilityfinetuningbertmisconceptions}. 
Under this view, probing resolves optimization-induced uncertainty rather than merely extracting features.

We formalize this perspective by modeling pre-hoc fine-tuning prediction as an estimation problem under progressively revealed information.
Instead of treating probing as a fixed design choice, we characterize how prediction uncertainty evolves with allocated computation.
We show that the Bayes-optimal prediction risk decomposes into a computation-independent intrinsic limit and a reducible optimization-induced variance~\cite{walha2025finegraineduncertaintydecompositionlarge}. 
Moreover, without modeling full optimization dynamics, we derive a necessary lower-bound constraint on how fast optimization-induced uncertainty can decay with computation~\cite{guo2025uncertaintyprofilesllmsuncertainty}, governed by stochastic approximation dynamics~\cite{zeng2025lensllmunveilingfinetuningdynamics,lin2024selectinglargelanguagemodel}. 
This implies that uncertainty cannot vanish arbitrarily fast and that pre-hoc prediction is inherently a problem of budget allocation with diminishing returns.

Our contributions are four-fold.
First, we derive a formal risk decomposition separating intrinsic ambiguity from computation-dependent uncertainty.
Second, we establish a theoretical lower bound on uncertainty decay, characterizing how quickly prediction uncertainty can be reduced.
Third, we formulate pre-hoc prediction as a risk-cost tradeoff and derive a principled condition for budget-optimal probing.
Finally, we synthesize intrinsic ambiguity and uncertainty dynamics into a predictability phase diagram that explains diverse empirical successes and failures of probing-based predictors.

\section{Related Work}
\label{sec:related_work}

\textbf{Pre-hoc performance prediction.}
Proxy-based methods~\cite{anugraha2024proxylmpredictinglanguagemodel,wang2025cosmospredictablecosteffectiveadaptation} estimate performance using smaller models or data subsets, relying primarily on static correlations that often degrade under distribution shift~\cite{zeng2025pretrainingindicatorsreliablypredict}.
Trajectory-based approaches~\cite{10.5555/2832581.2832731,klein2017learning} extrapolate final performance from early training statistics.
While empirically effective, these methods are predominantly heuristic: probing depth is treated as a fixed design choice, and predictors are typically implemented as black-box regressors, offering limited insight into how uncertainty evolves with computation.

\textbf{Scaling laws and training dynamics.}
Empirical scaling laws~\citep{kaplan2020scalinglawsneurallanguage,hoffmann2022trainingcomputeoptimallargelanguage} establish that expected model performance scales as a power-law function of compute, data size, and model capacity, with extensions to downstream fine-tuning and transfer learning~\cite{isik2026scalinglawsdownstreamtask,schram2025zeroshotperformancepredictionprobabilistic,barnett2024empiricalstudyscalinglaws}.
However, this literature primarily characterizes the \emph{first moment} of performance distributions.
In parallel, training dynamics have been studied through critical learning periods, information bottleneck theory~\cite{westphal2025generalizedinformationbottlenecktheory,michael2018on}, and the structure of gradient noise~\cite{sclocchi2023dissectingeffectssgdnoise,zeng2025lensllmunveilingfinetuningdynamics,lin2024selectinglargelanguagemodel}, suggesting that optimization outcomes are often determined early in training~\cite{achille2019criticallearningperiodsdeep}.
These works are largely descriptive and do not provide a normative framework for allocating computation.
In contrast, our approach focuses on the \emph{second moment} of fine-tuning outcomes, offering a principled account of uncertainty decay and budget-aware prediction grounded in risk decomposition.

\textbf{Data-centric and cost-aware LLM pipelines.}
A related but orthogonal line of work reduces the cost of LLM-powered systems through data, prompting, or pipeline design rather than predicting the final outcome of a candidate fine-tuning run. Examples include cost-effective in-context learning for entity resolution, weak-to-strong prompting for data transformation, natural-language data preparation, and LLM-based data-management pipelines \citep{fan2024costeffective,li2025weak,fan2025autoprep,chen2023seed}. These works motivate a broader view of compute-aware model development; our framework instead asks how much probe computation should be allocated before committing to full fine-tuning.
\section{Problem Setup and Notation}
\label{sec:setup}

We formalize pre-hoc fine-tuning prediction as the problem of \textbf{estimating a stochastic outcome under information constraints}.
A fine-tuning \emph{task instance} is defined as a tuple $\mathcal{T}=(M,D,\mathcal{A})$, where $M$ is the pre-trained foundation model, $D$ is the downstream dataset, and $\mathcal{A}$ denotes the stochastic optimization algorithm.
Executing $\mathcal{T}$ yields a scalar performance metric $R\in\mathbb{R}$, which is treated as a random variable drawn from a task-conditional distribution $P(R\mid\mathcal{T})$ due to inherent randomness in optimization (e.g., initialization, data ordering and non-deterministic operations)~\cite{gargiani2019probabilisticrolloutslearningcurve}.

\paragraph{Predictors and task-conditional risk.}
A predictor is a function $f:\mathcal{I}\rightarrow\mathbb{R}$ that maps an available information set to an estimate $\hat{R}$.
For a \emph{fixed task} $\mathcal{T}$, we define the \emph{task-conditional prediction risk} at computation budget $c$ as:
\begin{equation}
\label{eq:task_risk}
\mathcal{L}_{\mathcal{T}}(c)
\;\triangleq\;
\mathbb{E}_{R\mid\mathcal{T}}\!\left[(f(\mathcal{I}_c)-R)^2\right].
\end{equation}
For any fixed information set $\mathcal{I}_c$, this risk is minimized by the Bayes-optimal predictor
$f^*(\mathcal{I}_c)=\mathbb{E}[R\mid \mathcal{I}_c,\mathcal{T}]$,
which serves as the reference object for our theoretical analysis in Section~\ref{sec:theory}.
All quantities derived in Sections~\ref{sec:theory}--\ref{sec:experiments}, including the intrinsic limit and uncertainty decay rate, are defined \emph{at the task level} through $\mathcal{L}_{\mathcal{T}}(c)$.

\paragraph{Population risk.}
When analyzing collections of tasks, we additionally consider the \emph{population-level risk}, defined as the expectation of the task-conditional risk over a task distribution:
\begin{equation}
\label{eq:population_risk}
\mathcal{L}(c)
\;\triangleq\;
\mathbb{E}_{\mathcal{T}}\!\left[\mathcal{L}_{\mathcal{T}}(c)\right]
=
\mathbb{E}_{\mathcal{T},R}\!\left[(f(\mathcal{I}_c)-R)^2\right].
\end{equation}
This quantity is used primarily for empirical aggregation and reporting, while the structural properties studied in this paper are governed by the task-conditional risk in Eq.~\eqref{eq:task_risk}.

\paragraph{Information structure.}
The information available to the predictor at computation budget $c$ is denoted by $\mathcal{I}_c=\{X_s,X_d^{(c)}\}$, where $X_s$ represents static, ex-ante information available at zero marginal cost (e.g., data--model compatibility signals), and $X_d^{(c)}$ denotes dynamic information revealed by executing the optimization algorithm up to budget $c$ (e.g., observed training trajectories).
By construction, information is monotonic in computation: $\mathcal{I}_c\subseteq \mathcal{I}_{c'}$ for $c<c'$.

\paragraph{Computation cost and budget constraint.}
Acquiring information incurs a computational cost $C(c)$, which we assume to be strictly increasing with $C(0)=C_s$.
For analytical tractability, we adopt a linear cost model $C(c)=C_s+\lambda c$, where $\lambda$ denotes the marginal unit cost of probing.
Given a total budget $B$, the practitioner’s objective is to choose an optimal stopping point $c^*$ that minimizes prediction risk subject to the cost constraint:
\begin{equation}
\label{eq:budget_opt}
\min_{c\ge 0}\ \mathcal{L}_{\mathcal{T}}(c)
\quad \text{s.t.} \quad C(c)\le B .
\end{equation}
This formulation makes explicit that all subsequent analysis concerns how \emph{task-conditional uncertainty} decreases with computation, and how limited resources should be allocated to approach this limit efficiently.

\section{Theoretical Framework: Risk Decomposition}
\label{sec:theory}

In Section~\ref{sec:setup}, we defined the prediction risk $\mathcal{L}(c)$ as a function of the computation budget $c$ through the information set $\mathcal{I}_c$.
To understand the fundamental limits of pre-hoc prediction, we consider an ideal observer with access to the \emph{asymptotic information set}
$\mathcal{I}_\infty \triangleq \lim_{c\to\infty}\mathcal{I}_c$, corresponding to the complete optimization trajectory.
This perspective allows us to characterize what aspects of prediction uncertainty are fundamentally irreducible and which can, in principle, be resolved by additional computation.

For the Bayes-optimal predictor $f^*(\mathcal{I}_c)=\mathbb{E}[R\mid \mathcal{I}_c]$, the prediction risk admits the following decomposition, which follows directly from the law of total variance~\cite{kunievsky2026measuringintentcomprehensionllms}. 

\begin{proposition}[Risk Decomposition]
\label{prop:decomposition}
The Bayes-optimal prediction risk at computation budget $c$ can be written as
\begin{equation}
\label{eq:decomposition}
\mathcal{L}(c)
=
\underbrace{\mathbb{E}\!\left[\mathrm{Var}(R\mid\mathcal{I}_\infty)\right]}_{\mathcal{L}_{int}}
+
\underbrace{\mathbb{E}\!\left[\mathrm{Var}(R\mid\mathcal{I}_c)-\mathrm{Var}(R\mid\mathcal{I}_\infty)\right]}_{\mathcal{V}_{opt}(c)}.
\end{equation}
Here, $\mathcal{L}_{int}$ denotes the \emph{intrinsic limit} of predictability—the residual uncertainty that remains even with full access to the optimization trajectory—while $\mathcal{V}_{opt}(c)$ denotes the \emph{optimization variance}, i.e., the excess uncertainty due to observing only a finite prefix of the trajectory.
\end{proposition}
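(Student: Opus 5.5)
The plan is to reduce the statement to an identity for conditional variances and then add and subtract the asymptotic term. First, with $f^*(\mathcal{I}_c)=\mathbb{E}[R\mid\mathcal{I}_c]$, condition on $\mathcal{I}_c$ inside the expectation in Eq.~\eqref{eq:task_risk} (or Eq.~\eqref{eq:population_risk}) and apply the tower property:
\begin{equation*}
\mathcal{L}(c)=\mathbb{E}\bigl[\mathbb{E}[(\mathbb{E}[R\mid\mathcal{I}_c]-R)^2\mid\mathcal{I}_c]\bigr]=\mathbb{E}\bigl[\mathrm{Var}(R\mid\mathcal{I}_c)\bigr].
\end{equation*}
Here the expectation is over whatever randomness generates $\mathcal{I}_c$ and $R$: the optimization seed for a fixed task $\mathcal{T}$, and additionally $\mathcal{T}$ in the population version. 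In the latter case $\mathcal{T}$ (or $X_s$) is absorbed into the conditioning. Second, add and subtract $\mathbb{E}[\mathrm{Var}(R\mid\mathcal{I}_\infty)]$. Provided both terms are finite, which holds whenever $\mathbb{E}[R^2]<\infty$, this yields Eq.~\eqref{eq:decomposition} exactly as an algebraic identity.

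To justify the interpretation of $\mathcal{V}_{opt}(c)$ as a nonnegative, reducible excess, I will show
\begin{equation*}
\mathcal{V}_{opt}(c)=\mathbb{E}\bigl[(\mathbb{E}[R\mid\mathcal{I}_\infty]-\mathbb{E}[R\mid\mathcal{I}_c])^2\bigr]\ge 0.
\end{equation*}
This is the law of total variance applied conditionally on $\mathcal{I}_c$. The key fact is monotonicity $\mathcal{I}_c\subseteq\mathcal{I}_\infty$, which must be read as an inclusion of $\sigma$-algebras $\mathcal{F}_c\subseteq\mathcal{F}_\infty$. It gives $\mathbb{E}[\mathbb{E}[R\mid\mathcal{F}_\infty]\mid\mathcal{F}_c]=\mathbb{E}[R\mid\mathcal{F}_c]$. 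Hence $\mathrm{Var}(R\mid\mathcal{F}_c)=\mathbb{E}[\mathrm{Var}(R\mid\mathcal{F}_\infty)\mid\mathcal{F}_c]+\mathrm{Var}(\mathbb{E}[R\mid\mathcal{F}_\infty]\mid\mathcal{F}_c)$, and taking outer expectations gives the claim. The same argument shows that $\mathcal{V}_{opt}$ is nonincreasing in $c$.

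The main obstacle is making $\mathcal{I}_\infty\triangleq\lim_{c\to\infty}\mathcal{I}_c$ rigorous. I would define $\mathcal{F}_\infty=\sigma\bigl(\bigcup_c\mathcal{F}_c\bigr)$, so that $\mathbb{E}[R\mid\mathcal{F}_c]\to\mathbb{E}[R\mid\mathcal{F}_\infty]$ in $L^2$ by L\'evy's upward martingale convergence theorem for square-integrable $R$. This gives $\mathcal{V}_{opt}(c)\to0$, confirming that $\mathcal{L}_{int}$ is the computation-independent floor. Only the filtration property and $\mathbb{E}[R^2]<\infty$ are needed; everything else is bookkeeping.
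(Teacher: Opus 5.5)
Your derivation of the identity is the same as the paper's. The appendix also writes the Bayes risk as $\mathbb{E}[\mathrm{Var}(R\mid\mathcal{I}_c)]$ and then adds and subtracts $\mathbb{E}[\mathrm{Var}(R\mid\mathcal{I}_\infty)]$, so Eq.~\eqref{eq:decomposition} is pure bookkeeping in both treatments.

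You genuinely differ in how you justify reading $\mathcal{V}_{opt}(c)$ as a nonnegative, nonincreasing, vanishing excess.

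\begin{itemize}
\item \emph{Paper.} It derives nonnegativity and monotonicity from a claimed \emph{almost-sure} inequality $\mathrm{Var}(R\mid\mathcal{I}_c)\ge\mathrm{Var}(R\mid\mathcal{I}_\infty)$. It gets vanishing from a.s.\ convergence of conditional variances under unspecified ``standard regularity assumptions.''
\item \emph{You.} You identify $\mathcal{V}_{opt}(c)=\mathbb{E}[(\mathbb{E}[R\mid\mathcal{F}_\infty]-\mathbb{E}[R\mid\mathcal{F}_c])^2]$ via the conditional law of total variance, then apply L\'evy's upward theorem in $L^2$.
\end{itemize}

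Your route is the sounder one, because the pointwise inequality the paper relies on is false in general. Take $\mathcal{F}_c$ trivial and $\mathcal{F}_\infty=\sigma(X)$ with $X$ a fair bit. Let $R=0$ on $\{X=0\}$ and $R=\pm1$ equiprobably on $\{X=1\}$. Then $\mathrm{Var}(R\mid\mathcal{F}_c)=1/2$, while $\mathrm{Var}(R\mid\mathcal{F}_\infty)=1$ on $\{X=1\}$.

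Monotonicity of conditional variance holds only after taking expectations. That is exactly what your orthogonality identity delivers, and it covers both $c$ versus $\infty$ and $c<c'$.

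Your version also states its hypotheses explicitly: $\mathbb{E}[R^2]<\infty$, which rules out $\infty-\infty$, and $\mathcal{F}_\infty=\sigma\bigl(\bigcup_c\mathcal{F}_c\bigr)$. It sharpens the paper's later ``excess MMSE'' remark into a concrete formula: the expected squared gap between the finite-information and full-information posterior means.
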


The decomposition in Proposition~\ref{prop:decomposition} plays a purely organizational role: it separates prediction error into a computation-independent component and a computation-dependent component.
The intrinsic term $\mathcal{L}_{int}$ represents an irreducible error floor arising from static data--model mismatch and inherent stochasticity in the task.
By construction, it is invariant to the probing budget $c$ and cannot be reduced by allocating additional computation. The decomposition follows directly from the law of total variance; a formal proof is provided in Appendix~\ref{app:decomposition}.

In contrast, the optimization variance $\mathcal{V}_{opt}(c)$ captures the portion of uncertainty that is, in principle, reducible through dynamic observation of the optimization process.
Since the information set $\mathcal{I}_c$ grows monotonically with computation, $\mathcal{V}_{opt}(c)$ is non-negative and monotonically decreasing in $c$.
Crucially, this is the \emph{only} component of the prediction risk that can be affected by probing, making it the central object of interest in subsequent analysis.

This decomposition also admits a natural information-theoretic interpretation.
Dynamic probing can reduce $\mathcal{V}_{opt}(c)$ only if the observed optimization trajectory contains information about the outcome $R$ that is not present in static priors.
Formally, this requires the conditional mutual information $I(R;X_d^{(c)}\mid X_s)$ to be strictly positive.
If the trajectory is fully determined by static information (i.e., $I(R;X_d^{(c)}\mid X_s)=0$), then probing reveals no new information and cannot improve prediction accuracy.
From this viewpoint, pre-hoc prediction is best understood as an \emph{information revelation process}, where computation controls how rapidly reducible uncertainty $\mathcal{V}_{opt}(c)$ can be resolved.

\section{Governing Dynamics: An Asymptotic Constraint on Uncertainty Decay}
\label{sec:dynamics}

In Section~\ref{sec:theory}, we identified the optimization variance $\mathcal{V}_{opt}(c)$ as the only component of prediction risk that can be reduced through additional computation.
We now characterize a \emph{fundamental asymptotic constraint} on how fast this reducible uncertainty can dissipate as computation increases.
Our goal is not to model the full non-convex trajectory of large language model fine-tuning.
Instead, we seek a conservative \emph{rate-limiting envelope}: a necessary constraint on the decay of observable optimization-induced uncertainty once the optimization process enters a locally regular regime.

This distinction is important because fine-tuning dynamics need not be globally regular.
The early trajectory may include transient behavior, abrupt loss changes, or task-specific adaptation phases that are not well described by a single asymptotic law.
However, practical fine-tuning typically starts from a high-quality pre-trained solution, and informative gradient signals often emerge within a neighborhood of the initialization.
In such locally stable regions, stochastic approximation theory provides a useful description of how gradient noise is gradually averaged out by the optimizer.
We therefore analyze a regime in which the optimization dynamics are locally regular and the evaluation metric $R$ is locally Lipschitz with respect to the model parameters.
The resulting statement should be interpreted as an asymptotic envelope for the rate-limited phase of uncertainty contraction, rather than a pointwise description of the entire training process.

Within this locally regular regime, the decay of optimization-induced uncertainty is constrained by the interaction between gradient noise, the optimizer's step-size schedule, and the sensitivity of the evaluation metric to parameter perturbations.
This yields the following necessary asymptotic constraint.

\begin{proposition}[Asymptotic Uncertainty Decay Constraint]
\label{prop:lower_bound}
Consider a stochastic optimization process that admits a locally regular asymptotic regime in the sense described above.
Then there exists a constant $K>0$ and an exponent $\alpha>0$ such that the optimization-induced uncertainty admits the asymptotic envelope
\begin{equation}
\label{eq:power_law}
\mathcal{V}_{opt}(c) \;\gtrsim\; K\,c^{-\alpha}
\quad \text{as } c\to\infty ,
\end{equation}
where $\alpha$ is an \emph{effective decay rate} jointly determined by the local optimization dynamics and the sensitivity of the evaluation metric.
A formal derivation under classical stochastic approximation conditions is provided in Appendix~\ref{app:decay}.
\end{proposition}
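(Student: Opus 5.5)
We will derive the envelope from classical stochastic approximation in a neighborhood of a local minimizer $\theta^\star$, where the iterates follow a Robbins--Monro / SGD recursion $\theta_{k+1}=\theta_k-\eta_k\bigl(\nabla F(\theta_k)+\xi_{k+1}\bigr)$. We assume the standard conditions. The step sizes are $\eta_k\asymp k^{-\beta}$ with $\beta\in(1/2,1]$. The noise $\xi_{k+1}$ is a martingale difference with conditional covariance converging to a positive definite $\Sigma$. The Hessian $H=\nabla^2F(\theta^\star)$ is positive definite, so that $\sum_k\eta_k=\infty$ and the recursion is asymptotically linearizable. We identify the budget $c$ with the iteration count $k$, and write the final outcome as $R=g(\theta_\infty)+\varepsilon$. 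Here $g$ is locally Lipschitz and (where needed) differentiable with $\nabla g(\theta^\star)\neq 0$, and $\varepsilon$ collects evaluation noise not determined by the trajectory; it contributes only to $\mathcal{L}_{int}$ in Proposition~\ref{prop:decomposition}.

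\textbf{Step 1: reduce $\mathcal{V}_{opt}(c)$ to a residual-fluctuation variance.} By the tower property and the definition in Proposition~\ref{prop:decomposition},
\[
\mathcal{V}_{opt}(c)=\mathbb{E}\bigl[\mathrm{Var}\bigl(\mathbb{E}[R\mid\mathcal{I}_\infty]\,\big|\,\mathcal{I}_c\bigr)\bigr],
\]
so it suffices to lower bound the conditional variance of $g(\theta_\infty)$ given the prefix $\mathcal{F}_c=\sigma(\theta_0,\dots,\theta_c)$. The key point is that the future noise $\xi_{c+1},\xi_{c+2},\dots$ is not revealed by $\mathcal{I}_c$. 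Consequently, $\theta_\infty-\theta_c$ retains a nondegenerate random component even conditionally on $\mathcal{I}_c$.

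\textbf{Step 2: quantify the residual fluctuation.} Linearizing around $\theta^\star$ gives
\[
\theta_\infty-\theta_c\approx-\sum_{j\ge c}\Pi_{j,\infty}\,\eta_j\xi_{j+1}+\text{(predictable drift)},
\]
where $\Pi_{j,\infty}$ are the products $\prod(I-\eta_iH)$. The drift is $\mathcal{F}_c$-measurable up to higher-order terms and does not contribute to the conditional variance. Hence the conditional covariance is approximately
\[
\sum_{j\ge c}\eta_j^2\,\Pi_{j,\infty}\Sigma\Pi_{j,\infty}^\top .
\]
We work with the iterate at the horizon of interest (or the Polyak average), which is what the practical outcome is. Standard computations (Polyak--Juditsky, Benveniste--Métivier--Priouret) then show that this covariance is bounded below by $\kappa\,c^{-\gamma}\,\lambda_{\min}(\cdot)$ for some $\gamma>0$. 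For example, $\gamma=\beta$ for the last iterate under $\eta_k\asymp k^{-\beta}$, and $\gamma=1$ for averaged iterates. Multiplying by $\nabla g(\theta^\star)$ via the delta method gives
\[
\mathrm{Var}(g(\theta_\infty)\mid\mathcal{I}_c)\gtrsim \nabla g^\top\,\mathrm{Cov}\,\nabla g \ge K c^{-\alpha},
\]
with $\alpha=\gamma$, and more generally $\alpha$ is modified by the local order of $g$ if $\nabla g$ vanishes. Taking expectations yields the envelope~\eqref{eq:power_law}.

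\textbf{Main obstacle.} The delicate step is justifying that the linearization and delta-method remainders are negligible \emph{conditionally} on $\mathcal{I}_c$, and that the conditional variance does not collapse. Two things are needed. First, the future noise must be genuinely unpredictable from the observed prefix; we will assume $\mathbb{E}[\xi_{j+1}\xi_{j+1}^\top\mid\mathcal{F}_j]\succeq\sigma^2 I$. Second, the remainders must be of order $o(c^{-\alpha})$ in $L^2$. We would handle the remainders using the standard $L^2$ rate $\mathbb{E}\|\theta_c-\theta^\star\|^2=O(\eta_c)$, together with local Lipschitz and second-order smoothness of $g$, restricted to the event that iterates stay in the basin. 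That event has probability tending to one in the locally regular regime. This is why the result is stated as an asymptotic envelope with $\gtrsim$ rather than an exact rate.
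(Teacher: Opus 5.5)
Your Step~1 identity $\mathcal{V}_{opt}(c)=\mathbb{E}[\mathrm{Var}(\mathbb{E}[R\mid\mathcal{I}_\infty]\mid\mathcal{I}_c)]$ is correct. It is also a cleaner way to isolate the optimization variance than the paper's route, which lower-bounds $\mathbb{E}[\mathrm{Var}(R\mid\mathcal{F}_t)]$ and then sets aside ``irreducible terms.''

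Step~2, however, fails under your own hypotheses. With $H\succ 0$ and $\sum_k\eta_k=\infty$, each product $\prod_{i>j}(I-\eta_i H)$ tends to $0$, so $\Pi_{j,\infty}=0$ for every $j$. In the linearized model this means:
\begin{itemize}
\item $\theta_\infty=\theta^\star$ is deterministic, which is just the Robbins--Monro convergence you assumed;
\item the conditional covariance $\sum_{j\ge c}\eta_j^2\Pi_{j,\infty}\Sigma\Pi_{j,\infty}^\top$ vanishes identically;
\item hence $\mathcal{V}_{opt}(c)=0$, the opposite of the envelope.
\end{itemize}
The rates you cite ($\gamma=\beta$ for the last iterate, $\gamma=1$ for Polyak averaging) describe the \emph{unconditional} spread of the \emph{current} iterate. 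Concretely, $\mathrm{Cov}(\theta_c)\approx\eta_c V$ with $HV+VH^\top=\Sigma$, and $\mathrm{Cov}(\bar\theta_c)\approx c^{-1}H^{-1}\Sigma H^{-1}$. But $\theta_c$ and $\bar\theta_c$ are $\mathcal{I}_c$-measurable, so this spread contributes nothing to $\mathrm{Var}(\cdot\mid\mathcal{I}_c)$.

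Switching to a fixed-horizon outcome $g(\theta_T)$ does not rescue the argument. Only the last $O(1/\eta_T)$ noise injections survive the contraction, so $\mathrm{Cov}(\theta_T\mid\mathcal{F}_c)$ is of order $\eta_T$. It is essentially independent of $c$ until $c$ comes within $O(1/\eta_T)$ steps of $T$, after which it drops to $0$. There is no $c^{-\beta}$ law, and $c\to\infty$ is not even meaningful when $c\le T$. Averaged iterates behave the same way. So the ``non-collapse'' you flag as the main obstacle is exactly where the proof breaks. Assuming $\mathbb{E}[\xi_{j+1}\xi_{j+1}^\top\mid\mathcal{F}_j]\succeq\sigma^2 I$ does not help, because the problem is the damping of the injected noise, not its injection.

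The missing ingredient is a non-contracting direction. The paper's derivation in Appendix~\ref{app:decay} builds this in through the requirement that the transition operators ``preserve at least one noise-excited direction.'' You need some $v$ with:
\begin{itemize}
\item $Hv=0$, i.e.\ a flat valley or manifold of minimizers, which is natural for over-parameterized fine-tuning;
\item $v^\top\Sigma v>0$;
\item $\nabla g$ having a nonzero component along $v$.
\end{itemize}
Along such a $v$ the limit $\theta_\infty$ is genuinely random. The increment $v^\top(\theta_\infty-\theta_c)=-\sum_{j\ge c}\eta_j v^\top\xi_{j+1}$ is a martingale tail, and its conditional variance is at least $\sigma^2\sum_{j\ge c}\eta_j^2\asymp c^{-(2\beta-1)}$. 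Combine this with your Step~1 and a delta-method expansion. Take that expansion around $\theta_c$ rather than a fixed $\theta^\star$, since the limit now wanders along the valley. This yields the envelope with $\alpha=2\beta-1$, matching the paper's $\alpha=2\rho-1$, not the $\alpha=\beta$ or $\alpha=1$ your argument claims.
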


\begin{proof}[Proof Sketch]
The argument follows from the asymptotic behavior of stochastic optimization in a locally stable regime.
Under standard stochastic approximation conditions, parameter uncertainty contracts at a rate governed by the decay of the step size and the structure of gradient noise.
For example, for a representative polynomially decaying step-size schedule $\eta_c \propto c^{-\rho}$ with $\rho>1/2$, the covariance of the parameter iterate contracts no faster than $\mathrm{Cov}(\theta_c)=\Omega(c^{-(2\rho-1)})$.
When the evaluation metric $R(\theta)$ is locally smooth, the Delta method implies that fluctuations in $R$ inherit a corresponding polynomial decay.
This yields a power-law envelope governing the observable contraction of optimization-induced uncertainty.
We emphasize that this argument characterizes an asymptotic rate constraint in the locally regular regime, not a tight pointwise bound for every phase of a non-convex training trajectory.
\end{proof}

\textbf{Mechanism.}
At a high level, this behavior reflects the accumulation of stochastic gradient noise and its gradual attenuation through decaying step sizes.
Because noise is suppressed only polynomially in typical stochastic optimization regimes, uncertainty about the eventual optimization outcome cannot collapse arbitrarily fast once the contraction becomes rate-limited.
The exponent $\alpha$ therefore acts as an effective descriptor of information revelation under a given task--optimizer pair, rather than a universal constant of the optimizer or the model family.

\textbf{Operational Interpretation.}
The effective decay rate $\alpha$ characterizes how rapidly uncertainty relevant for pre-hoc prediction is revealed as computation increases.
Large values of $\alpha$ correspond to rapid contraction of uncertainty, where shallow probing can already reveal most regime-relevant information (the \emph{Static-Sufficient} regime).
Smaller values of $\alpha$ indicate delayed or prolonged uncertainty contraction, requiring extended probing to resolve outcome variability (the \emph{Dynamic-Critical} regime).
This perspective explains why fixed-step probing strategies can perform inconsistently across tasks and motivates the budget-aware allocation strategy developed in the next section.
\section{Budget-Optimal Probing Strategy}
\label{sec:optimization}

Building on the governing dynamics derived in Section~\ref{sec:dynamics}, we now address the question of \emph{how much probing is worth performing}.
Because optimization-induced uncertainty decays with diminishing returns, pre-hoc prediction naturally gives rise to an \textbf{optimal stopping problem under computational cost}.

Rather than treating the probing budget as a hard constraint, we formulate probing as a risk-cost tradeoff.
This perspective reveals that the optimal probing depth is governed by task-specific uncertainty dynamics.

\subsection{Risk-Cost Tradeoff Formulation}

Let $\mathcal{L}_{\mathcal T}(c)$ denote the task-conditional Bayes risk at computation $c$, and let $C(c)$ denote the associated cost.
We consider the tradeoff objective
\begin{equation}
\label{eq:tradeoff_objective}
\min_{c \ge 0}
\;\;
\mathcal{L}_{\mathcal T}(c) + \gamma\, C(c),
\end{equation}
where $\gamma>0$ converts computation into an equivalent risk penalty.

Using the decomposition from Proposition~\ref{prop:decomposition}, we write
\begin{equation}
\mathcal{L}_{\mathcal T}(c)
=
\mathcal{L}_{\mathcal T,int}
+
\mathcal{V}_{opt}(c),
\end{equation}
where $\mathcal{L}_{\mathcal T,int}$ denotes the irreducible intrinsic limit of the task.
Motivated by Section~\ref{sec:dynamics}, we adopt a canonical power-law envelope for the optimization-induced uncertainty:
\begin{equation}
\label{eq:canonical_variance}
\mathcal{V}_{opt}(c) \approx K\, c^{-\alpha},
\qquad \alpha>0,
\end{equation}
where $\alpha$ is interpreted as an effective decay rate.

\subsection{Equilibrium Condition}

\begin{theorem}[Budget-Optimal Probing Equilibrium]
\label{thm:optimal_probing}
Assume $\mathcal{V}_{opt}(c)=Kc^{-\alpha}$ and a differentiable, strictly increasing cost function $C(c)$.
Then any interior optimum $c^\star$ of \eqref{eq:tradeoff_objective} satisfies
\begin{equation}
\label{eq:equilibrium_condition}
\left| \frac{d\mathcal{V}_{opt}(c)}{dc} \right|_{c=c^\star}
=
\gamma\, C'(c^\star).
\end{equation}
\end{theorem}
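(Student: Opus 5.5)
The plan is to apply the first-order necessary condition for an interior minimum of a differentiable function of one variable, after using Proposition~\ref{prop:decomposition} to isolate the only $c$-dependent part of the risk. Define the objective
\[
J(c) \triangleq \mathcal{L}_{\mathcal T}(c) + \gamma C(c) = \mathcal{L}_{\mathcal T,int} + K c^{-\alpha} + \gamma C(c), \qquad c>0.
\]
The intrinsic term $\mathcal{L}_{\mathcal T,int}$ is, by construction, independent of $c$, since it is defined through $\mathcal{I}_\infty$. It therefore drops out upon differentiation.

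The key steps, in order, are as follows.

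\begin{enumerate}
\item \emph{Differentiability on the interior.} Restrict attention to $c\in(0,\infty)$. On this domain $c\mapsto Kc^{-\alpha}$ is smooth, and $C$ is differentiable by assumption, so $J$ is differentiable there. An interior optimum means $c^\star\in(0,\infty)$ is a local minimizer, which excludes the boundary $c=0$ where $c^{-\alpha}$ blows up.

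\item \emph{Fermat's theorem.} Since $c^\star$ is a local minimizer of a function differentiable at $c^\star$, we get $J'(c^\star)=0$. Computing the derivative gives
\[
-\alpha K (c^\star)^{-\alpha-1} + \gamma C'(c^\star) = 0 .
\]

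\item \emph{Rewriting in the stated form.} Because $K>0$ and $\alpha>0$, we have $\frac{d\mathcal{V}_{opt}}{dc} = -\alpha K c^{-\alpha-1} < 0$. Hence $\bigl|\frac{d\mathcal{V}_{opt}}{dc}\bigr|_{c=c^\star} = \alpha K (c^\star)^{-\alpha-1}$, and this equals $\gamma C'(c^\star)$ by step 2, which is \eqref{eq:equilibrium_condition}.

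\item \emph{Consistency check (optional remark).} The strict monotonicity of $C$ gives $C'\ge 0$, so both sides of \eqref{eq:equilibrium_condition} are nonnegative. The equation expresses the balance between the marginal reduction in risk and the marginal cost. For the linear cost model $C(c)=C_s+\lambda c$ it yields the explicit solution
\[
c^\star=\left(\frac{\alpha K}{\gamma\lambda}\right)^{1/(\alpha+1)} .
\]
Here $J''>0$ holds because $Kc^{-\alpha}$ is convex, so this stationary point is the unique global minimizer.
\end{enumerate}

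There is no serious obstacle. The only point needing care is to state precisely what "interior" means. One must exclude $c=0$, where $\mathcal{V}_{opt}$ is not finite under the power-law model, and also any budget-constrained endpoint from \eqref{eq:budget_opt}. This ensures Fermat's theorem applies at $c^\star$ without boundary terms or KKT multipliers.

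The theorem claims only a necessary condition, so existence and uniqueness need not be proved in general. I would mention the convexity argument only for the linear-cost case.
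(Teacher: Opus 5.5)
Your proposal is correct and follows essentially the same argument as the paper (Appendix~D). There the derivative of $\mathcal{L}_{\mathcal T}(c)+\gamma C(c)$ is set to zero with the intrinsic term dropping out, giving $-\alpha K c^{-(\alpha+1)}+\gamma C'(c)=0$, and the fact that $\mathcal{V}_{opt}$ is decreasing is used to pass to the absolute value. Your extra care about what ``interior'' means and your convexity remark for the linear-cost case are sound but not needed for the necessary condition.
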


\paragraph{Closed-Form Solution (Linear Cost).}
For a linear cost model $C(c)=C_s+\lambda c$, the equilibrium condition yields
\begin{equation}
\label{eq:kkt_solution}
c^\star
=
\left(
\frac{\alpha K}{\gamma\lambda}
\right)^{\!\frac{1}{\alpha+1}}.
\end{equation}

\subsection{Offline Estimation of Governing Dynamics}
\label{sec:offline_estimation}

We now describe a practical procedure for estimating the governing dynamics $(\alpha,K,\mathcal{L}_{\mathcal T,int})$.
This procedure is best viewed as an \emph{offline calibration step} for analysis, benchmarking, and probe-budget design, rather than as an oracle available during deployment.
The calibrated quantities are task-level descriptors of uncertainty dynamics: they summarize how information is revealed under a given task--optimizer pair, and are not intended to be recovered exactly for every deployment instance.

Given a set of probing depths $\{c_1,\ldots,c_m\}$, we run probes at these depths and compute a \emph{surrogate uncertainty proxy}
$\widehat{U}(c_i)$ that reflects dispersion across repeated runs or predictions at depth $c_i$.
This proxy need not equal the true Bayes risk; it is only required to preserve the relative decay behavior of uncertainty across probing budgets.
Operationally, a probe is \emph{lightweight} when it is short relative to the cost of full fine-tuning, yet long enough to enter the stable interval where regime-relevant information has begun to emerge.
Thus, lightweightness is not tied to a universal number of steps, but to the risk. 

We then jointly fit the model
\begin{equation}
\widehat{U}(c) \approx \mathcal{L}_{\mathcal T,int} + K c^{-\alpha}
\end{equation}
via nonlinear or log--log regression, avoiding sequential plug-in bias.
The resulting $\widehat{\alpha}$ is interpreted as an effective descriptor of the task's uncertainty dynamics, while $\widehat{\mathcal{L}}_{\mathcal T,int}$ captures the empirical floor approached by the surrogate uncertainty proxy.
In downstream use, these fitted quantities support coarse regime-level decisions and probe-budget selection, rather than exact latent-parameter recovery.

\begin{algorithm}[t]
\caption{Offline Calibration of Budget-Optimal Probing}
\label{alg:budget_aware_probing}
\begin{algorithmic}[1]
\REQUIRE Pretrained model $M$, dataset $D$, probing depths $\{c_1,\dots,c_m\}$, tradeoff parameter $\gamma$, unit cost $\lambda$
\ENSURE Estimated optimal probing depth $\widehat{c}^{\star}$
\STATE Run lightweight probes at depths $\{c_1,\dots,c_m\}$ and compute uncertainty proxies $\widehat{U}(c_i)$
\STATE Jointly fit $(\widehat{\alpha},\widehat{K},\widehat{\mathcal{L}}_{\mathcal T,int})$ from $\{\widehat{U}(c_i)\}$
\STATE Compute equilibrium probing depth
\STATE \hspace{1em}$\widehat{c}^\star \leftarrow
\left(\frac{\widehat{\alpha}\,\widehat{K}}{\gamma\,\lambda}\right)^{\frac{1}{\widehat{\alpha}+1}}$
\STATE \textbf{return} $\widehat{c}^{\star}$
\end{algorithmic}
\end{algorithm}

\paragraph{Remarks.}
Algorithm~\ref{alg:budget_aware_probing} avoids reliance on observing the final fine-tuning outcome and does not require access to complete training trajectories.
The fitted decay rate $\widehat{\alpha}$ should be interpreted as a task-level, optimization-dependent descriptor rather than a universal constant.
Repeated probes are used in offline analysis to reduce noise in estimating the surrogate uncertainty curve; they are not part of the deployment-time protocol.
In practice, coarse fits over a small number of probing depths are sufficient to recover stable regime-level behavior, as demonstrated in Section~\ref{sec:experiments} and the sensitivity analysis in Appendix~\ref{app:experiments}.

\section{Phase Diagram and Predictability Regimes}
\label{sec:phase_diagram}

The framework developed in Sections~\ref{sec:theory}--\ref{sec:optimization} implies that fine-tuning predictability cannot be captured by a single scalar notion of difficulty.
Instead, predictability emerges from the interaction between two distinct mechanisms:
the \emph{intrinsic ambiguity} of the task, captured by $\mathcal{L}_{int}$, and the \emph{temporal dynamics of information revelation}, captured by the effective decay rate $\alpha$.

In this section, we synthesize these factors into a \textbf{predictability phase diagram}.
Rather than serving as a rigid categorization of tasks, this diagram provides a mechanism-based view of why certain tasks are predictable early, others only after extended computation, and some not at all.

\subsection{Phase Space of Predictability}

We represent each fine-tuning task $\mathcal{T}$ as a point in a two-dimensional phase space spanned by the governing quantities introduced earlier.

\paragraph{Intrinsic limit ($\mathcal{L}_{int}$).}
The horizontal axis corresponds to the irreducible prediction floor.
Large values of $\mathcal{L}_{int}$ indicate fundamental data--model mismatch or inherent stochasticity in the evaluation metric, which bounds the achievable prediction accuracy even under full information.
Tasks with noisy labels, ambiguous supervision, or weak alignment between the evaluation metric and model behavior naturally exhibit elevated intrinsic limits.

\paragraph{Effective decay rate ($\alpha$).}
The vertical axis captures the speed at which optimization-induced uncertainty can be resolved.
Small values of $\alpha$ correspond to slow or delayed information revelation, while larger values indicate that uncertainty collapses rapidly once informative gradients are available.
This quantity reflects properties of the optimization trajectory itself, rather than static dataset characteristics.

Together, these axes define a landscape of predictability that is independent of any specific predictor architecture or probing heuristic.
While probing is one operational means of revealing information, the phase diagram characterizes properties of the task--optimization pair itself.

\subsection{Regimes of Information Revelation}
\label{subsec:regimes}

The regimes described below correspond to qualitatively different behaviors of the risk--cost tradeoff derived in Section~\ref{sec:optimization}.
Formally, they arise from distinct solution structures of the equilibrium condition in Theorem~\ref{thm:optimal_probing} as the task parameters $(\mathcal{L}_{int}, \alpha)$ vary.

\paragraph{Regime I: Static-Sufficient (Bias-Dominant).}
This regime arises when uncertainty collapses rapidly or when the intrinsic limit dominates the total risk.
Here, the final outcome is effectively determined by coarse properties of the data--model pair, and dynamic observation provides little additional information beyond what is already encoded in static descriptors.

In practice, many surface-level adaptation tasks fall into this regime.
For example, sentiment classification benchmarks such as SST-2, as well as several GLUE-style classification tasks, consistently exhibit large decay exponents $\alpha$ and reach their predictability saturation with minimal probing.
In such cases, static priors alone nearly saturate achievable prediction accuracy, and extended trajectory observation yields diminishing returns.

\paragraph{Regime II: Dynamic-Critical (Variance-Dominant).}
This regime captures tasks with low intrinsic ambiguity but slow or delayed information revelation.
Although accurate prediction is possible in principle, the outcome depends sensitively on the realization of the optimization trajectory.

Extended plateaus, saddle-point-dominated dynamics, and delayed generalization phenomena (e.g., grokking~\cite{power2022grokkinggeneralizationoverfittingsmall}) naturally fall within this regime.
Concrete examples include arithmetic and multi-step reasoning benchmarks such as GSM8K, as well as complex code generation tasks, where early-stage training dynamics carry essential predictive information.
Within our framework, such tasks are characterized by small effective decay rates $\alpha$, leading to large optimal probing depths $c^*$.

\paragraph{Regime III: Noise-Dominant (Intrinsic-Limited).}
This regime arises when irreducible ambiguity dominates prediction error.
Even with extensive probing, the total risk remains high due to a large intrinsic limit.

In this setting, additional probing yields diminishing practical benefits.
The predictor is effectively modeling stochastic noise rather than a stable signal.
Tasks with substantial label noise, extreme domain shift, or severe model underspecification typically exhibit this behavior, and pre-hoc prediction is fundamentally constrained.

\subsection{Implications for Empirical Predictability}

The phase diagram offers a unifying explanation for seemingly contradictory empirical observations.
For instance, shallow probing strategies often succeed on benchmarks such as SST-2 not because the predictors are necessarily more expressive, but because these tasks reside in the static-sufficient regime, where uncertainty collapses rapidly.
In contrast, the same strategies tend to fail on reasoning-intensive benchmarks such as GSM8K, not simply due to methodological shortcomings, but because such tasks lie in the dynamic-critical regime, where information revelation is intrinsically delayed.

This perspective also clarifies how different sources of predictive signal should be used.
Static proxies, such as dataset statistics, reference-model perplexity, or data--model compatibility scores, are most informative when the dominant uncertainty is already encoded in the task before optimization begins.
Early-trajectory signals, such as loss decay, gradient stability, or short-horizon validation behavior, become more valuable when reducible optimization variance remains substantial after static information is observed.
The phase diagram therefore provides a way to interpret these signals not as competing alternatives, but as complementary views of different components of prediction risk.

Practically, this suggests a regime-aware use of pre-hoc predictors.
For static-sufficient tasks, additional probing may provide little benefit beyond static descriptors, and shallow or even zero-probe predictors may be adequate.
For dynamic-critical tasks, early optimization trajectories carry essential information, and reliable prediction requires allocating enough probe budget for the relevant dynamics to emerge.
For noise-dominant tasks, neither richer static features nor deeper probing can fully overcome the intrinsic floor, so prediction scores should be interpreted with higher uncertainty.

From this perspective, the failure of a pre-hoc predictor should be interpreted not simply as a flaw of the predictor itself, but as a \emph{regime mismatch} between the estimator's design, the allocated probing budget, and the task's governing dynamics.
The practical role of the phase diagram is thus to guide when static information is sufficient, when dynamic probing is worth its cost, and when additional computation is unlikely to materially change the prediction decision.

This view also connects to data-quality and data-efficiency methods that treat dataset-level or early-training signals as actionable evidence, including early-loss-based mislabel detection, cost-effective missing-value imputation, coreset selection, and selective data acquisition \citep{DBLP:journals/pvldb/DengCCTWFYW24,DBLP:journals/tods/ChaiJTFMWLLYW25,DBLP:journals/pacmmod/ChaiL0FM0L023,DBLP:journals/pvldb/ChaiLTLL22}. In our setting, analogous signals are not used to directly clean or acquire data, but to distinguish reducible optimization variance from intrinsic ambiguity.

\section{Empirical Validation}
\label{sec:experiments}

Our theoretical framework makes \emph{structural} predictions about fine-tuning predictability,
rather than claims about the absolute accuracy of any specific predictor.
In particular, it implies that:
(i) optimization-induced uncertainty cannot dissipate arbitrarily fast
and is governed by a task-dependent decay rate $\alpha$ (Section~\ref{sec:dynamics});
(ii) fine-tuning tasks organize into distinct predictability regimes
characterized by their effective intrinsic floor and uncertainty decay behavior
(Section~\ref{sec:phase_diagram});
and (iii) optimal probing follows a regime-dependent stopping structure
rather than a fixed heuristic budget (Section~\ref{sec:optimization}).

In this section, we empirically validate these \emph{structural predictions}
using controlled experiments and large-scale fine-tuning benchmarks.
All empirical quantities reported here are interpreted as
\emph{observable uncertainty proxies} that reflect optimization-induced variability,
rather than direct estimates of the theoretical prediction risk.
Detailed estimation procedures and justifications are provided in Appendix~\ref{app:experiments}.

\subsection{Experimental Setup and Estimation Protocols}
\label{sec:exp_setup}

We adopt a unified probing protocol to empirically instantiate
the key qualitative behaviors predicted by our framework,
including uncertainty decay, regime structure, and diminishing returns of probing.
The purpose of this protocol is structural validation:
we use controlled repetitions to estimate observable uncertainty curves and to
analyze how these curves organize tasks into predictability regimes.
This differs from deployment-time use, where the calibrated regime structure
serves as guidance for probe-budget selection and decision support rather than
requiring repeated estimation of all latent quantities.

\paragraph{Tasks and regime coverage.}
We evaluate a diverse collection of fine-tuning tasks spanning different objectives
and data conditions, selected to cover all three predictability regimes introduced
in Section~\ref{sec:phase_diagram}:
\emph{Static-Sufficient}, \emph{Dynamic-Critical}, and \emph{Noise-Dominant}.
These include, for example, sentiment classification,
arithmetic reasoning, and tasks with ambiguous or noisy supervision.
Detailed experimental configurations
are deferred to Appendix~\ref{app:tasks}.

\paragraph{Probing axis and repeated runs.}
We define the probing computation axis $c$ as the number of probe optimization steps,
which serves as a consistent proxy for computational cost.
For each task and each probing depth $c \in \mathcal{C}$,
we perform $N = 1{,}500$ independent fine-tuning runs with different random seeds,
yielding an empirical distribution of realized final outcomes.
This repeated-run protocol is used to make the uncertainty decay behavior identifiable
under controlled offline conditions.
It is not intended as a deployment-time requirement: in deployment, the framework
uses offline-calibrated descriptors and regime-level trends to guide how much
probing is worth performing.

\paragraph{Observable uncertainty proxies.}
At each probing depth $c$, we compute an observable uncertainty proxy
that captures run-to-run variability induced by stochastic optimization.
This quantity does \emph{not} estimate the theoretical prediction risk $L(c)$ defined in
Section~\ref{sec:setup}, nor does it assume access to asymptotically complete information.
Instead, it serves as a measurable surrogate that preserves
the relative uncertainty decay behavior analyzed in
Sections~\ref{sec:dynamics}--\ref{sec:optimization}.
Accordingly, the empirical estimates of $\alpha$ and
$\mathcal{L}_{\mathcal T,int}$ should be interpreted as descriptors of the
surrogate uncertainty curve, rather than exact online quantities to be recovered
for every deployment instance.
Formal definitions, normalization procedures, and fitting protocols
are provided in Appendix~\ref{app:experiments}.

\subsection{Decomposability and Power-Law Uncertainty Decay}
\label{sec:power_law_decay}

We first examine the empirical behavior of observable uncertainty decay
predicted by Proposition~\ref{prop:lower_bound},
which establishes a power-law constraint
on how fast optimization-induced uncertainty can dissipate
under stable training dynamics.

\begin{figure}[t]
  \centering
  \includegraphics[width=\linewidth]{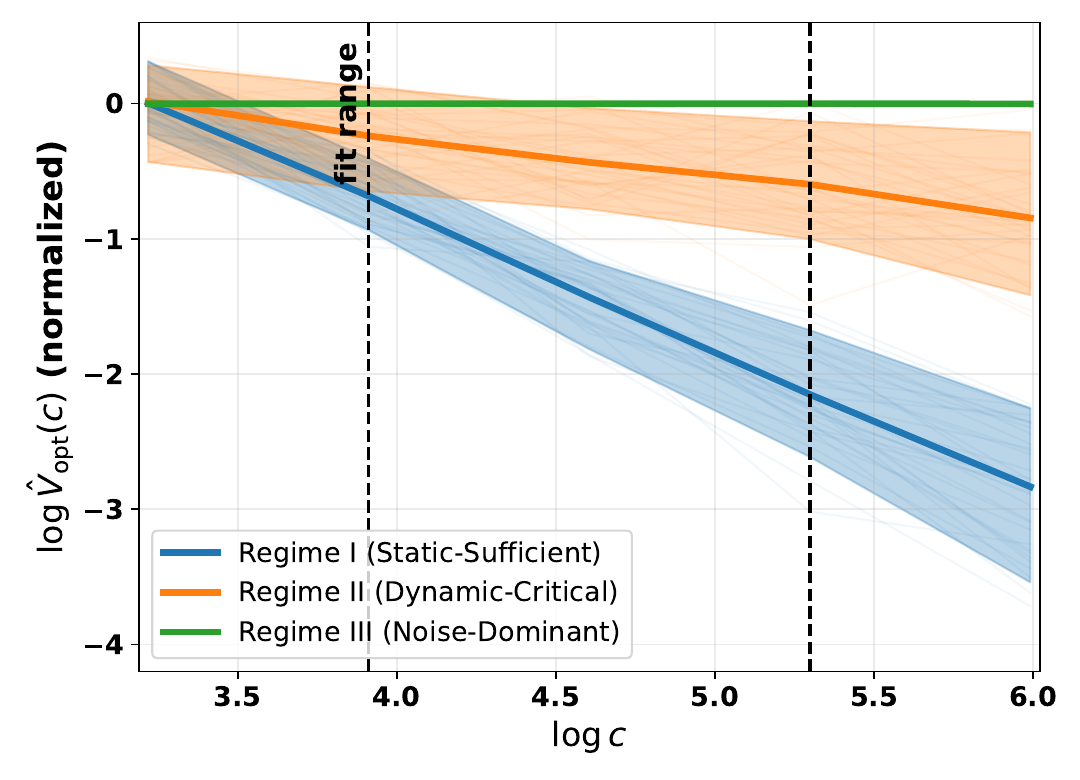}
  \caption{Population-level uncertainty decay across regimes.
  The figure shows normalized observable uncertainty as a function of probing depth $c$
  in log-log scale, aggregated by regime.
  Solid curves represent regime-wise means across tasks,
  with shaded regions indicating bootstrap confidence intervals.
  Vertical dashed lines denote the stable probing interval used for fitting.}
  \label{fig:population_decay}
\end{figure}

Figure~\ref{fig:population_decay} reports the population-level behavior
of normalized observable uncertainty aggregated by regime.
Across all regimes, the log--log relationship between uncertainty and probing depth
exhibits an approximately linear trend within the fitted range,
consistent with the rate-limiting power-law behavior predicted by theory.

Importantly, the empirically observed decay rates differ systematically across regimes.
In \emph{Static-Sufficient} regimes, uncertainty contracts rapidly,
indicating that dynamic probing information is exhausted almost immediately.
This behavior reflects that probing is largely unnecessary,
as outcomes are already well-determined by static task characteristics.

In contrast, \emph{Dynamic-Critical} regimes exhibit substantially slower decay,
with uncertainty persisting over a wide range of probing depths.
This pattern indicates that early-stage optimization dynamics
contain essential predictive information,
making probing critical under practical budgets.

Finally, in \emph{Noise-Dominant} regimes,
observable uncertainty remains dominated by an effective intrinsic floor,
indicating that prediction difficulty arises primarily from irreducible ambiguity
rather than insufficient probing.

Together, these results demonstrate that the decay exponent $\alpha$
captures a meaningful notion of \emph{dynamic hardness}
and provides an empirical counterpart to the governing uncertainty dynamics
characterized in Section~\ref{sec:dynamics}.

\subsection{Phase Diagram and Regime Assignment}
\label{sec:phase_diagram}

We next examine how fine-tuning tasks are organized
in the two-dimensional space induced by
their effective intrinsic floor and uncertainty decay rate.
This empirical phase diagram provides a concrete instantiation
of the regime taxonomy introduced in Section~\ref{sec:phase_diagram}.

\begin{figure}[t]
  \centering
  \includegraphics[width=\linewidth]{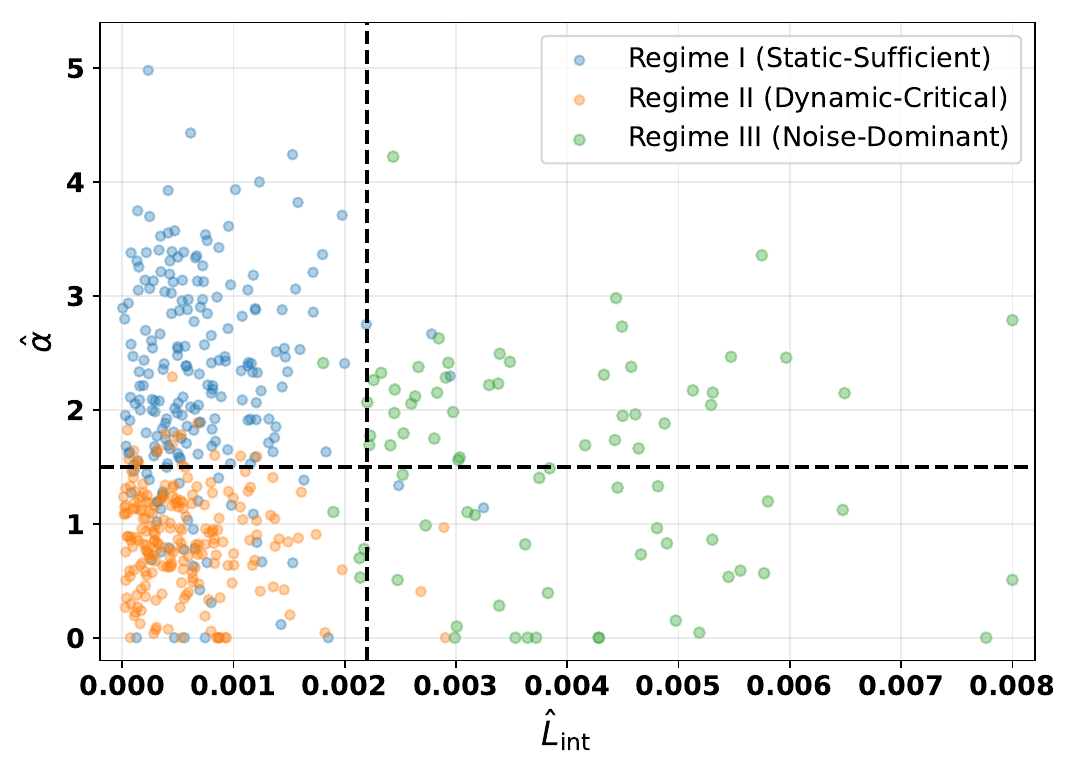}
  \caption{Empirical phase diagram of fine-tuning tasks.
  Each point corresponds to a task,
  positioned by its estimated effective intrinsic floor
  and empirically observed decay exponent.
  Colors indicate regime assignments.
  Dashed reference lines are shown for interpretability
  and reflect regime-level structure rather than learned decision boundaries.}
  \label{fig:phase_diagram}
\end{figure}

Figure~\ref{fig:phase_diagram} plots all tasks in this reduced space.
A clear structural separation emerges.
\emph{Static-Sufficient} tasks concentrate in the region
characterized by low intrinsic ambiguity and rapid uncertainty decay,
indicating that outcomes are largely determined by static factors.
In contrast, \emph{Dynamic-Critical} tasks occupy regions
with similarly low intrinsic floors but markedly slower decay,
reflecting the decisive role of training dynamics in predictability.

\emph{Noise-Dominant} tasks are primarily separated along the intrinsic-floor axis,
spreading toward higher ambiguity across a wide range of decay rates.
This pattern highlights that a large intrinsic floor,
rather than the decay rate alone,
governs predictability in this regime.

Overall, the empirical phase diagram aligns closely
with the theoretical regime definitions
and demonstrates that the proposed descriptors
provide a meaningful low-dimensional representation
of fine-tuning behavior.

\subsection{Efficiency Frontier and Optimal Probing}
\label{sec:optimal_probing}

We next study how probing budgets should be allocated in practice.
Building on the regime-dependent uncertainty decay identified above,
we analyze the marginal benefit of additional probing
and its implications for optimal stopping under limited resources.

\begin{figure}[t]
  \centering
  \includegraphics[width=\linewidth]{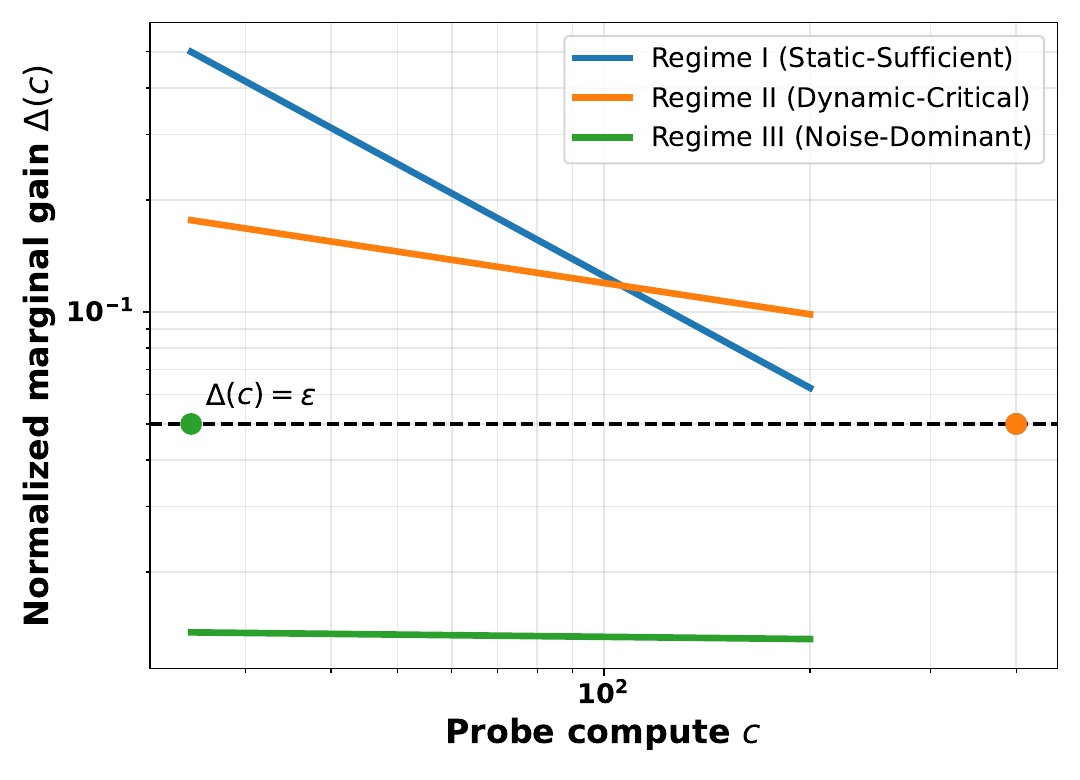}
  \caption{Regime-dependent marginal gain of probing.
  The figure plots the normalized marginal reduction in observable uncertainty
  $\Delta(c)$ as a function of probing depth $c$ for different regimes.
  All curves exhibit diminishing returns,
  but at markedly different rates.}
  \label{fig:efficiency_frontier}
\end{figure}

Figure~\ref{fig:efficiency_frontier} visualizes the marginal gain of probing,
defined as the reduction in normalized observable uncertainty
achieved by an incremental increase in probing depth
(see Appendix~\ref{app:normalization} for a precise definition).
Consistent with the analysis in Section~\ref{sec:optimization},
marginal gains decay monotonically with increasing $c$ across all regimes.

However, the rate at which marginal gains decay differs substantially.
In \emph{Static-Sufficient} regimes,
marginal gains vanish almost immediately,
indicating that additional probing is unnecessary.
In \emph{Dynamic-Critical} regimes,
marginal gains persist over a wide range of budgets,
making extended probing cost-effective.
In \emph{Noise-Dominant} regimes,
marginal gains remain negligible due to the dominance
of the effective intrinsic floor.

These patterns empirically validate the regime-aware stopping principles
derived in Theorem~\ref{thm:optimal_probing}.
They also suggest that the practical value of probing lies in identifying the point
at which additional computation mainly refines the uncertainty estimate
without materially changing the regime-level decision.
Appendix~\ref{app:probe_budget_sensitivity} provides a complementary
probe-budget sensitivity analysis showing how regime agreement and uncertainty
estimates evolve with increasing probe depth.

\subsection{Regime Mismatch and Empirical Failures}
\label{sec:regime_mismatch}

We examine empirical failure cases of probing-based predictors.
Rather than attributing failures to estimator design,
our framework explains them through \emph{structural regime mismatch}
between task dynamics and applied probing strategy.

Probing is informative only when dynamic uncertainty is both present and reducible.
In \emph{Static-Sufficient} regimes, outcomes are largely determined by static task characteristics, so dynamic uncertainty is exhausted immediately.
In \emph{Noise-Dominant} regimes, prediction error is dominated by a large intrinsic risk floor, making additional probing ineffective.
\vspace{-0.5em}

\paragraph{Case I: Static-Sufficient Regime.}
Sentiment classification tasks such as SST-2 provide a representative example.
Probing-based predictors reach near-saturated performance after only a few steps,
with consistently large estimated decay rates,
indicating rapid information revelation.

As a result, deeper probing yields diminishing returns and may introduce unnecessary variance
from stochastic optimization.
In this regime, apparent probing failures should therefore be interpreted
as applying dynamic probing where static descriptors already suffice.

\paragraph{Case II: Dynamic-Critical Regime.}
Arithmetic reasoning benchmarks such as GSM8K exhibit markedly different behavior.
Although the intrinsic ambiguity $\mathcal{L}_{int}$ remains low,
the empirically observed decay rate is small,
indicating slow and delayed information revelation.
Optimization trajectories often exhibit extended plateaus,
leading to prolonged predictive uncertainty.

Here, shallow probing underestimates uncertainty and produces unstable predictions.
Failures arise not because probing is flawed,
but because the probing depth is insufficient relative to the task’s governing dynamics.
This behavior is characteristic of the \emph{Dynamic-Critical} regime,
where reliable prediction emerges only after substantial computation.

Additional ablation results and qualitative analyses are provided
in Appendix~\ref{app:ablation} and Appendix~\ref{app:hard_cases}.

\section{Conclusion and Limitations}

This work studies pre-hoc fine-tuning prediction from a structural perspective by treating it as a problem of uncertainty resolution under computational constraints.
We characterize the limits of predictability by decomposing prediction risk into an intrinsic component and a computation-dependent optimization variance, establishing a necessary constraint on how fast this uncertainty can dissipate, and organizing fine-tuning tasks into a regime-dependent phase space.
Together, these results explain when static information is already sufficient, when dynamic probing reveals essential additional signal, and when prediction is limited by an intrinsic uncertainty floor.
This provides a principled basis for budget-aware probing decisions beyond heuristic step-count rules, and offers a way to interpret existing static-proxy and early-trajectory predictors through the components of prediction risk they primarily capture.

Our analysis avoids modeling the full non-convex optimization dynamics of large language models.
In particular, we do not claim that fine-tuning trajectories universally follow a specific decay law, nor that the decay exponent $\alpha$ corresponds to a fixed physical constant.
Instead, $\alpha$ should be interpreted as an effective descriptor of how quickly uncertainty is revealed within a locally regular, rate-limited regime under a given task--optimizer pair.
Similarly, the intrinsic floor and decay parameters are best understood as structural descriptors for offline calibration and regime-level decision support, rather than quantities that must be recovered exactly in every deployment instance.
Task regimes may also shift under different hyperparameters, model families, or training protocols.
These limitations reflect the intended scope of the framework and point to future directions where more detailed dynamical modeling, adaptive stopping rules, and integration with practical predictor architectures may further improve pre-hoc fine-tuning decisions.
\clearpage
\section*{Acknowledgements}

This work is supported by Guangdong provincial project (Project No. 2023CX10X008).

\section*{Impact Statement}

This work aims to advance the theoretical understanding of pre-hoc fine-tuning prediction through a risk decomposition framework.
A potential positive impact is improved efficiency in model development: by clarifying when early probing is informative, redundant, or structurally limited, the framework can help practitioners make more informed decisions about whether and how long to fine-tune a model.
This may reduce unnecessary computational expenditure, lower experimentation cost, and support more systematic allocation of fine-tuning resources.

At the same time, pre-hoc prediction should not be interpreted as a replacement for careful downstream evaluation.
A potential risk is that practitioners may over-rely on predicted performance or regime assignments when deciding whether to continue training, especially in settings where evaluation data are noisy, incomplete, or misaligned with deployment objectives.
Such misuse could prematurely discard useful datasets or configurations, or reinforce biases present in the surrogate signals used for probing.
Our framework is therefore intended as a decision-support tool rather than an automatic acceptance or rejection mechanism.

The proposed analysis does not introduce new model architectures, training objectives, or deployed systems, and does not directly change model behavior.
Nevertheless, when used in practical fine-tuning pipelines, we recommend reporting the probing budget, uncertainty estimates, task coverage, and limitations of the calibration protocol.
These practices can help ensure that compute-saving decisions remain transparent, auditable, and appropriately qualified.

\bibliography{example_paper}
\bibliographystyle{icml2026}

\newpage
\appendix
\onecolumn
\appendix
\section{Proofs for Risk Decomposition}
\label{app:decomposition}

This appendix provides the formal proofs and supporting results for the risk decomposition framework introduced in Section~\ref{sec:theory}. We rigorously establish how the prediction risk decomposes into an irreducible intrinsic component and a reducible optimization-induced component under increasing information.

\subsection{Formal Setup and Bayes-Optimal Prediction}

Let $R \in \mathbb{R}$ denote the random outcome of a fine-tuning task, and let $\{\mathcal{I}_c\}_{c \ge 0}$ denote a filtration representing the accumulated information available to the predictor as a function of computation $c$. By construction,
\begin{equation}
\mathcal{I}_c \subseteq \mathcal{I}_{c'} \quad \text{for all } c < c'.
\end{equation}

For any fixed information set $\mathcal{I}_c$, consider predictors $f : \mathcal{I}_c \to \mathbb{R}$ evaluated under the mean squared error (MSE) risk
\begin{equation}
\mathcal{L}(f; c) \triangleq \mathbb{E}\!\left[(f(\mathcal{I}_c) - R)^2\right].
\end{equation}

It is a standard result in statistical decision theory that the Bayes-optimal predictor minimizing $\mathcal{L}(f; c)$ is given by the conditional expectation
\begin{equation}
f^*(\mathcal{I}_c) = \mathbb{E}[R \mid \mathcal{I}_c],
\end{equation}
and that the corresponding minimal risk is
\begin{equation}
\mathcal{L}(c)
= \mathcal{L}(f^*; c)
= \mathbb{E}\!\left[(R - \mathbb{E}[R \mid \mathcal{I}_c])^2\right]
= \mathbb{E}\!\left[\mathrm{Var}(R \mid \mathcal{I}_c)\right].
\end{equation}

This identity serves as the starting point for all subsequent derivations.

\subsection{Proof of Proposition~\ref{prop:decomposition}}

We now prove the risk decomposition stated in Proposition~\ref{prop:decomposition}.

\paragraph{Restatement.}
Let $\mathcal{I}_\infty = \lim_{c \to \infty} \mathcal{I}_c$ denote the information set corresponding to the complete optimization trajectory. Then the Bayes risk at computation $c$ admits the decomposition
\begin{equation}
\mathcal{L}(c)
= \underbrace{\mathbb{E}\!\left[\mathrm{Var}(R \mid \mathcal{I}_\infty)\right]}_{\mathcal{L}_{int}}
+ \underbrace{\mathbb{E}\!\left[\mathrm{Var}(R \mid \mathcal{I}_c) - \mathrm{Var}(R \mid \mathcal{I}_\infty)\right]}_{\mathcal{V}_{opt}(c)}.
\end{equation}

\paragraph{Proof.}
Starting from the expression for the Bayes risk,
\begin{equation}
\mathcal{L}(c) = \mathbb{E}\!\left[\mathrm{Var}(R \mid \mathcal{I}_c)\right],
\end{equation}
we add and subtract $\mathbb{E}[\mathrm{Var}(R \mid \mathcal{I}_\infty)]$ to obtain
\begin{equation}
\mathcal{L}(c)
= \mathbb{E}\!\left[\mathrm{Var}(R \mid \mathcal{I}_\infty)\right]
+ \mathbb{E}\!\left[\mathrm{Var}(R \mid \mathcal{I}_c) - \mathrm{Var}(R \mid \mathcal{I}_\infty)\right].
\end{equation}

The first term depends only on the full information set $\mathcal{I}_\infty$ and is therefore independent of $c$. We define this term as the intrinsic limit
\begin{equation}
\mathcal{L}_{int} \triangleq \mathbb{E}\!\left[\mathrm{Var}(R \mid \mathcal{I}_\infty)\right].
\end{equation}

The second term captures the excess uncertainty due to observing only a finite prefix of the optimization trajectory and is defined as
\begin{equation}
\mathcal{V}_{opt}(c) \triangleq \mathbb{E}\!\left[\mathrm{Var}(R \mid \mathcal{I}_c) - \mathrm{Var}(R \mid \mathcal{I}_\infty)\right].
\end{equation}
This yields the stated decomposition. \qed

\subsection{Properties of the Optimization Variance Term}

We now establish basic structural properties of $\mathcal{V}_{opt}(c)$.

\subsubsection{Non-negativity}

\paragraph{Claim.}
\begin{equation}
\mathcal{V}_{opt}(c) \ge 0 \quad \text{for all } c \ge 0.
\end{equation}

\paragraph{Proof.}
Since $\mathcal{I}_c \subseteq \mathcal{I}_\infty$, conditioning on $\mathcal{I}_\infty$ provides at least as much information as conditioning on $\mathcal{I}_c$. A standard property of conditional variance implies
\begin{equation}
\mathrm{Var}(R \mid \mathcal{I}_c) \ge \mathrm{Var}(R \mid \mathcal{I}_\infty)
\quad \text{almost surely}.
\end{equation}
Taking expectations on both sides yields $\mathcal{V}_{opt}(c) \ge 0$. \qed

\subsubsection{Monotonicity with Respect to Computation}

\paragraph{Claim.}
If $c < c'$, then
\begin{equation}
\mathcal{V}_{opt}(c) \ge \mathcal{V}_{opt}(c').
\end{equation}

\paragraph{Proof.}
By the filtration property, $\mathcal{I}_c \subseteq \mathcal{I}_{c'} \subseteq \mathcal{I}_\infty$. Applying the monotonicity of conditional variance,
\begin{equation}
\mathrm{Var}(R \mid \mathcal{I}_c)
\ge \mathrm{Var}(R \mid \mathcal{I}_{c'})
\ge \mathrm{Var}(R \mid \mathcal{I}_\infty).
\end{equation}
Subtracting $\mathrm{Var}(R \mid \mathcal{I}_\infty)$ and taking expectations yields
\begin{equation}
\mathcal{V}_{opt}(c) \ge \mathcal{V}_{opt}(c').
\end{equation}
\qed

\subsubsection{Asymptotic Vanishing}

\paragraph{Claim.}
\begin{equation}
\lim_{c \to \infty} \mathcal{V}_{opt}(c) = 0.
\end{equation}

\paragraph{Proof.}
By definition, $\mathcal{I}_c \uparrow \mathcal{I}_\infty$ as $c \to \infty$. Under standard regularity assumptions ensuring convergence of conditional expectations (e.g., dominated convergence),
\begin{equation}
\lim_{c \to \infty} \mathrm{Var}(R \mid \mathcal{I}_c)
= \mathrm{Var}(R \mid \mathcal{I}_\infty)
\quad \text{almost surely}.
\end{equation}
Taking expectations yields the result. \qed

\subsection{Interpretation: Floor and Gap}

The results above formally justify the interpretation adopted in the main text. The intrinsic limit $\mathcal{L}_{int}$ represents an irreducible error floor determined by the task itself and remains invariant to computation. In contrast, the optimization variance $\mathcal{V}_{opt}(c)$ represents a reducible uncertainty gap that shrinks monotonically as additional information from the optimization trajectory is revealed. This structural separation underpins the governing dynamics analyzed in Section~\ref{sec:dynamics} and the budget-optimal probing strategy developed in Section~\ref{sec:optimization}.

\section{Information-Theoretic Interpretation and Properties}
\label{app:info}

This appendix complements Section~\ref{sec:theory} by formalizing the information-theoretic perspective behind static–dynamic complementarity and the ``information revelation'' viewpoint. Throughout, let $\mathcal{I}_c=\{X_s, X_d^{(c)}\}$ denote the accumulated information at computation $c$, and let $\mathcal{I}_\infty=\{X_s, X_d^{(\infty)}\}$ denote the asymptotic information set.

\subsection{Conditional Mutual Information as a Criterion for Complementarity}
\label{app:info:cmi}

We first relate the usefulness of dynamic probes to the conditional mutual information
\begin{equation}
I(R;X_d^{(c)}\mid X_s).
\end{equation}
Intuitively, this quantity measures the \emph{non-redundant} information about the outcome $R$ that is revealed by observing the optimization trajectory prefix beyond what is already contained in static priors.

\begin{lemma}[Redundancy implies no gain]
\label{lem:redundancy_no_gain}
If $I(R;X_d^{(c)}\mid X_s)=0$, then conditioning on $X_d^{(c)}$ does not change the posterior of $R$ given $X_s$, i.e.,
\begin{equation}
p(R\mid X_s,X_d^{(c)})=p(R\mid X_s)\quad \text{a.s.},
\end{equation}
and consequently
\begin{equation}
\mathbb{E}\!\left[\mathrm{Var}(R\mid X_s,X_d^{(c)})\right]
=
\mathbb{E}\!\left[\mathrm{Var}(R\mid X_s)\right].
\end{equation}
\end{lemma}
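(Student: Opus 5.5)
The plan is to use the standard characterization of conditional mutual information as an expected Kullback--Leibler divergence, and to exploit the fact that a KL divergence vanishes only when the two distributions coincide. Concretely, I would write
\begin{equation*}
I(R;X_d^{(c)}\mid X_s)
=
\mathbb{E}_{X_s,X_d^{(c)}}\!\left[
\mathrm{KL}\!\left(p(R\mid X_s,X_d^{(c)})\,\big\|\,p(R\mid X_s)\right)
\right].
\end{equation*}
The integrand is a KL divergence, hence nonnegative. An expectation of a nonnegative random variable equals zero only if that variable is zero almost surely. So the hypothesis $I(R;X_d^{(c)}\mid X_s)=0$ forces the KL term to vanish for almost every realization of $(X_s,X_d^{(c)})$.

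By Gibbs' inequality, the equality case of KL gives
\begin{equation*}
p(R\mid X_s,X_d^{(c)})=p(R\mid X_s) \quad \text{a.s.}
\end{equation*}
This is the first claim. Equivalently, $R$ and $X_d^{(c)}$ are conditionally independent given $X_s$.

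For the second claim, I would note that the conditional variance depends only on the conditional law of $R$. Since the two posteriors agree almost surely, the conditional means and second moments agree, so $\mathrm{Var}(R\mid X_s,X_d^{(c)})=\mathrm{Var}(R\mid X_s)$ a.s., assuming $\mathbb{E}[R^2]<\infty$, which the MSE framework already requires. Taking expectations gives the stated identity.

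This can be linked back to Proposition~\ref{prop:decomposition}. With $\mathcal{I}_c=\{X_s,X_d^{(c)}\}$, the identity says the Bayes risk at budget $c$ equals the zero-probe risk. So probing yields no reduction in $\mathcal{V}_{opt}$ relative to $c=0$.

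The main obstacle is measure-theoretic rigor in the general (non-discrete) case. The posteriors must be understood as regular conditional distributions, which exist since $R$ is real-valued. The KL/Radon--Nikodym representation of conditional mutual information also needs justification. I would handle this by:
\begin{itemize}
\item defining $I$ via the chain rule $I(R;X_d^{(c)}\mid X_s)=\mathbb{E}[\mathrm{KL}(P_{R\mid X_s,X_d^{(c)}}\,\|\,P_{R\mid X_s})]$ as the working definition;
\item citing the standard fact that zero conditional mutual information is equivalent to conditional independence.
\end{itemize}
Everything after that step is routine.
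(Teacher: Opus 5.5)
Your proposal is correct and follows the same route as the paper. Zero conditional mutual information gives conditional independence $R \perp\!\!\!\perp X_d^{(c)} \mid X_s$, hence equal posteriors almost surely, hence equal conditional variances almost surely, and taking expectations gives the identity; the only difference is that you derive the first step from the expected-KL representation and Gibbs' inequality and state the needed $\mathbb{E}[R^2]<\infty$ assumption, whereas the paper cites that equivalence as a standard fact.
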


\paragraph{Proof.}
The condition $I(R;X_d^{(c)}\mid X_s)=0$ is equivalent to conditional independence
$R \perp\!\!\!\perp X_d^{(c)} \mid X_s$,
which implies $p(R\mid X_s,X_d^{(c)})=p(R\mid X_s)$ almost surely. Taking conditional variances under the same conditional distribution yields
$\mathrm{Var}(R\mid X_s,X_d^{(c)})=\mathrm{Var}(R\mid X_s)$ almost surely, and the expectation identity follows. \qed

\begin{corollary}[Strict variance reduction requires positive conditional MI]
\label{cor:necessity_cmi}
If $\mathbb{E}[\mathrm{Var}(R\mid X_s,X_d^{(c)})] < \mathbb{E}[\mathrm{Var}(R\mid X_s)]$, then $I(R;X_d^{(c)}\mid X_s)>0$.
\end{corollary}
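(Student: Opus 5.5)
The corollary is the contrapositive of Lemma~\ref{lem:redundancy_no_gain}, so the plan is to argue by contradiction. Assume
\[
\mathbb{E}[\mathrm{Var}(R\mid X_s,X_d^{(c)})] < \mathbb{E}[\mathrm{Var}(R\mid X_s)],
\]
and suppose, contrary to the claim, that $I(R;X_d^{(c)}\mid X_s)>0$ fails.

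The first step is to rule out negative values. Conditional mutual information is always nonnegative, since it is an expected Kullback--Leibler divergence between $p(R,X_d^{(c)}\mid X_s)$ and $p(R\mid X_s)\,p(X_d^{(c)}\mid X_s)$. So the failure of strict positivity means exactly
\[
I(R;X_d^{(c)}\mid X_s)=0.
\]

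The second step is to invoke Lemma~\ref{lem:redundancy_no_gain} directly. It gives
\[
\mathbb{E}[\mathrm{Var}(R\mid X_s,X_d^{(c)})]=\mathbb{E}[\mathrm{Var}(R\mid X_s)].
\]
This equality contradicts the assumed strict inequality. Hence $I(R;X_d^{(c)}\mid X_s)>0$.

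No step is a real obstacle. The only point needing care is the nonnegativity of conditional mutual information, which is what turns ``not $>0$'' into ``$=0$''. I would justify it briefly via Gibbs' inequality applied to the conditional KL divergence, assuming the relevant conditional densities (or regular conditional distributions) exist, as in the lemma. Finiteness of the variances is implicitly assumed so that the expectations compared in the hypothesis are well defined.
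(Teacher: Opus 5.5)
Your proposal is correct and is the paper's argument: the paper also proves the corollary as the contrapositive of Lemma~\ref{lem:redundancy_no_gain}, where $I(R;X_d^{(c)}\mid X_s)=0$ forces equality of the expected conditional variances and so contradicts the strict inequality. The one difference is that you state explicitly that conditional mutual information is nonnegative, which is what turns ``not $>0$'' into ``$=0$''; the paper uses this step without saying so.
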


\paragraph{Proof.}
By Lemma~\ref{lem:redundancy_no_gain}, $I(R;X_d^{(c)}\mid X_s)=0$ would imply equality of the two expected conditional variances, contradicting strict reduction. \qed

\subsection{Monotone Information Revelation Along the Filtration}
\label{app:info:monotone}

Because $\mathcal{I}_c$ grows with $c$ (as a filtration), information revealed by dynamic probes must be monotone in computation.

\begin{lemma}[Monotonicity of conditional mutual information]
\label{lem:mi_monotone}
If $c<c'$, then
\begin{equation}
I(R;X_d^{(c)}\mid X_s)\le I(R;X_d^{(c')}\mid X_s).
\end{equation}
\end{lemma}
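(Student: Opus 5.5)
The plan is to derive the inequality from the chain rule for mutual information, using the filtration structure $\mathcal{I}_c \subseteq \mathcal{I}_{c'}$ from Section~\ref{sec:setup}. The first step is to make this nesting concrete at the level of random variables. Since $c<c'$, the dynamic prefix $X_d^{(c)}$ is contained in the longer prefix $X_d^{(c')}$. I will read this as saying that $X_d^{(c)}$ is a measurable function of $X_d^{(c')}$: the shorter trajectory is a truncation of the longer one. Consequently $X_d^{(c')}$ carries the same information as the pair $(X_d^{(c)}, X_d^{(c')})$, in the sense that $\sigma(X_d^{(c)}, X_d^{(c')}) = \sigma(X_d^{(c')})$. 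This gives
\begin{equation}
I(R;X_d^{(c')}\mid X_s) = I(R;X_d^{(c)},X_d^{(c')}\mid X_s).
\end{equation}

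The second step is to expand the right-hand side with the chain rule for conditional mutual information:
\begin{equation}
I(R;X_d^{(c)},X_d^{(c')}\mid X_s) = I(R;X_d^{(c)}\mid X_s) + I(R;X_d^{(c')}\mid X_s,X_d^{(c)}).
\end{equation}
The last term is a conditional mutual information, so it is nonnegative: it is an expected KL divergence between $p(R, X_d^{(c')}\mid X_s, X_d^{(c)})$ and the product of the corresponding conditional marginals. Dropping it yields the claimed inequality. As a byproduct, the gap equals $I(R;X_d^{(c')}\mid \mathcal{I}_c)$, which is the incremental information revealed between budgets $c$ and $c'$. This matches the ``information revelation'' reading and parallels the monotonicity of $\mathcal{V}_{opt}$ proved in Appendix~\ref{app:decomposition}.

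The main obstacle is measure-theoretic rather than algebraic. $R$ is real-valued and the trajectories may be continuous or high-dimensional, so the entropy-difference form of the chain rule may be ill-defined. To avoid this, I will work with the general definition of conditional mutual information as an expected KL divergence, or equivalently as a supremum over finite partitions in the Dobrushin/Pinsker sense. In that framework the chain rule and nonnegativity hold for arbitrary standard Borel variables, with values allowed to be $+\infty$, and the inequality remains valid in the extended reals. If the fully general setting is too heavy, the fallback is to assume that all variables admit conditional densities, or are discretized. Under that assumption the argument is the elementary chain rule, and the general case follows by the partition-supremum definition.
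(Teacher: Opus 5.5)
Your proof is correct and is essentially the paper's argument. The paper uses the same observation that $X_d^{(c)}$ is a measurable function of the longer prefix $X_d^{(c')}$, then invokes the data-processing inequality for the Markov chain $R \rightarrow (X_s,X_d^{(c')}) \rightarrow (X_s,X_d^{(c)})$; your chain-rule expansion plus nonnegativity of $I(R;X_d^{(c')}\mid X_s,X_d^{(c)})$ is exactly the standard proof of that inequality, with the added benefit that it identifies the gap as the incremental information $I(R;X_d^{(c')}\mid \mathcal{I}_c)$ and states the measure-theoretic setting that the paper leaves implicit.
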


\paragraph{Proof.}
Since $X_d^{(c)}$ is a measurable function of $X_d^{(c')}$ (the latter contains the former as a prefix), we have the Markov chain
\[
R \rightarrow (X_s,X_d^{(c')}) \rightarrow (X_s,X_d^{(c)}),
\]
and the claim follows from the data-processing inequality for conditional mutual information. \qed

This lemma formalizes the ``information revelation'' interpretation: as computation increases, the dynamic observation can only add (never remove) information about the outcome, and can only decrease the Bayes risk $\mathcal{L}(c)=\mathbb{E}[\mathrm{Var}(R\mid \mathcal{I}_c)]$.

\subsection{From Mutual Information to Variance Reduction}
\label{app:info:mi_to_var}

Section~\ref{sec:theory} motivates complementarity through conditional mutual information. Here we provide a precise quantitative relationship between mutual information and variance reduction. In full generality, mutual information is defined in terms of (differential) entropies:
\begin{equation}
I(R;X_d^{(c)}\mid X_s)=h(R\mid X_s)-h(R\mid X_s,X_d^{(c)}),
\end{equation}
where $h(\cdot)$ denotes differential entropy (for continuous $R$) and the discrete entropy analog applies if $R$ is discrete.

A direct link to conditional variances emerges under a standard (and explicitly stated) Gaussianity condition, which is commonly adopted when analyzing posterior contraction and stochastic approximation in local regimes.

\begin{proposition}[Gaussian posterior contraction identity]
\label{prop:gaussian_contraction}
Assume that for almost every realization of $(X_s,X_d^{(c)})$, the conditional distribution
$R\mid (X_s,X_d^{(c)})$ is Gaussian with variance $\sigma_c^2(X_s,X_d^{(c)})$,
and $R\mid X_s$ is Gaussian with variance $\sigma_0^2(X_s)$.
Then
\begin{equation}
I(R;X_d^{(c)}\mid X_s)
=
\frac{1}{2}\,\mathbb{E}\!\left[\log\frac{\sigma_0^2(X_s)}{\sigma_c^2(X_s,X_d^{(c)})}\right]
=
\frac{1}{2}\,\mathbb{E}\!\left[\log\frac{\mathrm{Var}(R\mid X_s)}{\mathrm{Var}(R\mid X_s,X_d^{(c)})}\right].
\end{equation}
\end{proposition}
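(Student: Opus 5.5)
The identity follows from writing the conditional mutual information as a difference of conditional differential entropies and evaluating each entropy with the Gaussian entropy formula. We use the definition already stated above,
\[
I(R;X_d^{(c)}\mid X_s)=h(R\mid X_s)-h(R\mid X_s,X_d^{(c)}),
\]
where conditional differential entropy is the average of the entropy of the conditional law. Explicitly,
\[
h(R\mid X_s)=\mathbb{E}_{X_s}\!\left[h\big(p(R\mid X_s)\big)\right],\qquad
h(R\mid X_s,X_d^{(c)})=\mathbb{E}_{X_s,X_d^{(c)}}\!\left[h\big(p(R\mid X_s,X_d^{(c)})\big)\right].
\]

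By hypothesis, for almost every realization each conditional law is Gaussian. A one-dimensional Gaussian with variance $\sigma^2$ has entropy $\tfrac12\log(2\pi e\,\sigma^2)$, independently of its mean. Hence
\[
h(R\mid X_s)=\mathbb{E}\!\left[\tfrac12\log(2\pi e\,\sigma_0^2(X_s))\right],\qquad
h(R\mid X_s,X_d^{(c)})=\mathbb{E}\!\left[\tfrac12\log(2\pi e\,\sigma_c^2(X_s,X_d^{(c)}))\right].
\]
Both expectations can be taken over the joint law of $(X_s,X_d^{(c)})$, since the first integrand depends only on $X_s$ and so the tower property applies. Subtracting, the constants $\log(2\pi e)$ cancel and linearity of expectation gives the first equality,
\[
I(R;X_d^{(c)}\mid X_s)=\tfrac12\,\mathbb{E}\!\left[\log\frac{\sigma_0^2(X_s)}{\sigma_c^2(X_s,X_d^{(c)})}\right].
\]

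The second equality is immediate: for Gaussian conditional laws the variance parameters are the conditional variances, so $\sigma_0^2(X_s)=\mathrm{Var}(R\mid X_s)$ and $\sigma_c^2=\mathrm{Var}(R\mid X_s,X_d^{(c)})$ almost surely. We substitute these into the previous display.

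The only delicate step is integrability, which is what allows the difference of expectations to be combined into a single expectation. This requires the two expected log-variances to be finite, or at least that we are not in an $\infty-\infty$ situation. We assume finite $\mathbb{E}|\log\sigma_0^2|$ and $\mathbb{E}|\log\sigma_c^2|$, which is implicit in the entropies being well defined. If only the mutual information is known to be finite, we can instead write $I$ as the expected KL divergence between $p(R\mid X_s,X_d^{(c)})$ and $p(R\mid X_s)$. We then use the Gaussian KL formula together with the fact that, by the tower property, the mean-shift and variance-ratio terms average out, which leaves exactly $\tfrac12\mathbb{E}\log(\sigma_0^2/\sigma_c^2)$. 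As a sanity check, the result is nonnegative and vanishes exactly in the redundant case of Lemma~\ref{lem:redundancy_no_gain}.
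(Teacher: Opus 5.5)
Your proposal is correct and follows the paper's proof. Like the paper, you write $I(R;X_d^{(c)}\mid X_s)=h(R\mid X_s)-h(R\mid X_s,X_d^{(c)})$, substitute the Gaussian entropy $\tfrac12\log(2\pi e\,\sigma^2)$ for each conditional law, and cancel the constants. Your integrability caveat and your KL-divergence fallback go beyond the paper's one-line argument, and both are sound. In the fallback, the law of total variance makes $\big(\sigma_c^2+(\mu_c-\mu_0)^2\big)/\sigma_0^2-1$ average to zero given $X_s$.
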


\paragraph{Proof.}
For a Gaussian random variable $Z\sim\mathcal{N}(m,\sigma^2)$, $h(Z)=\tfrac{1}{2}\log(2\pi e\sigma^2)$. Applying this to $R\mid X_s$ and $R\mid (X_s,X_d^{(c)})$ and taking their difference yields the stated identity. \qed

\paragraph{Interpretation.}
Under Gaussian posterior contraction, conditional mutual information is exactly the \emph{expected log-shrinkage} of the posterior variance. In particular, positive $I(R;X_d^{(c)}\mid X_s)$ implies a multiplicative reduction in conditional variance on average, not merely an additive decrease.

\medskip

More generally, without Gaussianity one can still obtain one-sided inequalities using the maximum entropy principle. Since for any random variable $Z$ with finite variance,
\begin{equation}
h(Z)\le \frac{1}{2}\log\left(2\pi e\,\mathrm{Var}(Z)\right),
\end{equation}
we obtain the following bound.

\begin{proposition}[Entropy upper bound yields a conservative MI--variance relation]
\label{prop:mi_var_bound}
Assume $R\mid X_s$ and $R\mid (X_s,X_d^{(c)})$ have finite conditional variances almost surely. Then
\begin{equation}
I(R;X_d^{(c)}\mid X_s)
\ge
h(R\mid X_s) - \frac{1}{2}\,\mathbb{E}\!\left[\log\left(2\pi e\,\mathrm{Var}(R\mid X_s,X_d^{(c)})\right)\right].
\end{equation}
In particular, any substantial decrease in $\mathrm{Var}(R\mid X_s,X_d^{(c)})$ forces a nontrivial lower bound on $I(R;X_d^{(c)}\mid X_s)$.
\end{proposition}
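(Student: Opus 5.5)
The plan is to write the mutual information as a difference of conditional differential entropies and to apply the maximum-entropy inequality only to the second term. By definition,
\[
I(R;X_d^{(c)}\mid X_s)=h(R\mid X_s)-h(R\mid X_s,X_d^{(c)}),
\]
where the conditional entropy $h(R\mid X_s,X_d^{(c)})$ is understood as $\mathbb{E}\bigl[h\bigl(R\mid X_s=x_s,X_d^{(c)}=x_d\bigr)\bigr]$, the average over realizations of the entropy of the conditional law. The first term $h(R\mid X_s)$ appears unchanged on the right-hand side of the claim, so the whole argument reduces to upper bounding $h(R\mid X_s,X_d^{(c)})$.

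For this, fix a realization $(x_s,x_d)$. By assumption the conditional law of $R$ given $(X_s,X_d^{(c)})=(x_s,x_d)$ has finite variance $\mathrm{Var}(R\mid X_s=x_s,X_d^{(c)}=x_d)$. The maximum entropy principle stated just above the proposition, namely that the Gaussian maximizes differential entropy at fixed variance, gives pointwise
\[
h\bigl(R\mid X_s=x_s,X_d^{(c)}=x_d\bigr)\le \tfrac12\log\bigl(2\pi e\,\mathrm{Var}(R\mid X_s=x_s,X_d^{(c)}=x_d)\bigr).
\]
Taking expectations over $(X_s,X_d^{(c)})$ preserves this inequality by monotonicity of expectation. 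Substituting the resulting bound into the difference-of-entropies identity, with a minus sign in front, yields exactly the stated lower bound. The ``in particular'' clause then follows by reading the bound directly: $\log$ is increasing, so shrinking $\mathrm{Var}(R\mid X_s,X_d^{(c)})$ lowers the subtracted term and raises the lower bound on the mutual information.

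The main obstacle is measure-theoretic rather than conceptual, so the proof should state the conventions that make each step legitimate. First, the identity $I=h(R\mid X_s)-h(R\mid X_s,X_d^{(c)})$ requires both conditional entropies to be finite, so that we never form $\infty-\infty$. Second, the pointwise bound needs $R$ to have a conditional density, since otherwise the differential entropy is $-\infty$ and the inequality still holds trivially. Third, the expectation of the log-variance must be well defined, possibly equal to $+\infty$. I would handle these by assuming $R$ has conditional densities and $h(R\mid X_s)$ is finite, as is implicit in the definition of $I$ used in the paper, and by noting that in the degenerate cases the inequality holds trivially in the extended reals.
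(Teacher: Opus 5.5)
Your proposal is correct and follows the same route as the paper. You write $I(R;X_d^{(c)}\mid X_s)$ as $h(R\mid X_s)-h(R\mid X_s,X_d^{(c)})$, bound the second term pointwise by the Gaussian maximum-entropy inequality, and take expectations. Your remarks on finite entropies, conditional densities, and a well-defined expected log-variance just make explicit the regularity conventions the paper leaves implicit.
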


\paragraph{Proof.}
By definition,
$I(R;X_d^{(c)}\mid X_s)=h(R\mid X_s)-h(R\mid X_s,X_d^{(c)})$.
Apply $h(\cdot)\le \tfrac{1}{2}\log(2\pi e\mathrm{Var}(\cdot))$ to the second term and take expectation. \qed

This proposition is deliberately conservative: it does not claim tightness, but it formalizes the direction that matters for our theory—variance reduction cannot occur without information gain.

\subsection{Information Revelation as Posterior Contraction Along Two Modes}
\label{app:info:revelation}

We now formalize the ``information revelation'' viewpoint. Let
\begin{equation}
\pi_c(r) \triangleq p(R=r \mid \mathcal{I}_c)
\end{equation}
denote the posterior distribution over outcomes given information at compute $c$. The Bayes risk under MSE is the expected posterior variance:
\begin{equation}
\mathcal{L}(c)=\mathbb{E}\!\left[\mathrm{Var}(R\mid \mathcal{I}_c)\right].
\end{equation}
Thus, decreasing $\mathcal{L}(c)$ corresponds to posterior contraction in the second moment sense.

\begin{remark}[Two modes of information revelation (formal view)]
\label{rem:two_modes_formal}
The filtration $\{\mathcal{I}_c\}_{c\ge 0}$ induces two distinct modes of posterior contraction:
\begin{enumerate}
\item \textbf{Ex-ante (static) contraction.}
Conditioning on $X_s$ transitions the prior $p(R)$ to the static posterior $p(R\mid X_s)$, yielding an initial reduction in uncertainty from $\mathrm{Var}(R)$ to $\mathrm{Var}(R\mid X_s)$. This contraction occurs at $c=0$ and reflects what can be inferred from the data–model pair \emph{without} observing optimization.
\item \textbf{In-process (dynamic) contraction.}
Conditioning further on $X_d^{(c)}$ transitions $p(R\mid X_s)$ to $p(R\mid X_s,X_d^{(c)})$, reducing the residual uncertainty attributable to stochastic optimization. This contraction unfolds continuously with $c$ and is quantified by $\mathcal{V}_{opt}(c)$ in Eq.~(\ref{eq:decomposition}).
\end{enumerate}
In this view, pre-hoc prediction is not a discrete hyperparameter selection problem. It is a controlled posterior contraction process: the practitioner allocates computation to shrink $\mathrm{Var}(R\mid \mathcal{I}_c)$ until the marginal information gain becomes negligible relative to cost.
\end{remark}

\subsection{A Useful Identity: Optimization Variance as an MMSE Gap}
\label{app:info:mmse_gap}

Finally, we record an identity that is often useful when connecting information gain to estimation performance. Recall from Appendix~\ref{app:decomposition} that
\begin{equation}
\mathcal{L}(c)=\mathbb{E}\!\left[\mathrm{Var}(R\mid \mathcal{I}_c)\right].
\end{equation}
Define the (Bayes-optimal) minimum mean squared error (MMSE) at compute $c$ by
\begin{equation}
\mathrm{mmse}(c) \triangleq \mathbb{E}\!\left[(R-\mathbb{E}[R\mid \mathcal{I}_c])^2\right].
\end{equation}
Then $\mathrm{mmse}(c)=\mathcal{L}(c)$, and the optimization variance term can be written as
\begin{equation}
\mathcal{V}_{opt}(c)=\mathcal{L}(c)-\mathcal{L}_{int}
=\mathrm{mmse}(c)-\mathrm{mmse}(\infty),
\end{equation}
i.e., the \emph{excess MMSE} incurred by observing only a finite trajectory prefix. This identity clarifies the functional role of dynamic probing: it reduces the MMSE gap left by static priors, and it does so only insofar as $I(R;X_d^{(c)}\mid X_s)$ is positive and grows with computation (Lemma~\ref{lem:mi_monotone}).

\section{Asymptotic Envelope for Uncertainty Decay}
\label{app:decay}

This appendix provides a concrete stochastic-approximation-based derivation illustrating the asymptotic uncertainty decay envelope discussed in Section~\ref{sec:dynamics}.
The purpose of this analysis is \emph{not} to characterize the exact conditional variance given infinite information,
but to demonstrate that under locally regular stochastic optimization dynamics,
\emph{observable uncertainty remaining at finite computation budgets cannot contract arbitrarily fast}.
The assumptions adopted here are intentionally stronger than those in the main text and are used solely to make this rate-limiting mechanism explicit.

\subsection{Local Stochastic Approximation Regime}
\label{app:decay:sa_model}

We consider an asymptotic regime of fine-tuning in which optimization proceeds within a locally stable neighborhood of a reference point $\theta^\star$.
Let $\theta_t$ denote the parameter state after $t$ optimization updates, and let $c$ denote a continuous computation index proportional to $t$.

\paragraph{Local regularity.}
Assume that in a neighborhood of $\theta^\star$, the loss admits a second-order expansion
\begin{equation}
L(\theta) = L(\theta^\star) + \frac{1}{2}(\theta-\theta^\star)^\top H(\theta-\theta^\star) + o(\|\theta-\theta^\star\|^2),
\end{equation}
where $H$ is symmetric and locally stable on the dynamically active subspace.
No assumption is made about global convexity of the loss landscape.

\paragraph{Stochastic optimization dynamics.}
Consider the stochastic approximation recursion
\begin{equation}
\theta_{t+1} = \theta_t - \eta_t\big(\nabla L(\theta_t) + \xi_t\big),
\label{eq:sgd_recursion_app}
\end{equation}
where $\{\xi_t\}$ is a martingale difference sequence satisfying
\begin{equation}
\mathbb{E}[\xi_t\mid \mathcal{F}_t]=0, \qquad 
\mathbb{E}[\xi_t\xi_t^\top\mid \mathcal{F}_t]=\Sigma_\xi,
\end{equation}
with $\Sigma_\xi$ positive semidefinite and non-degenerate along at least one direction relevant to the evaluation metric.

\paragraph{Step-size schedule.}
Assume a polynomially decaying step size
\begin{equation}
\eta_t = \eta_0 t^{-\rho}, \qquad \rho \in \left(\tfrac{1}{2},1\right],
\label{eq:stepsize_app}
\end{equation}
which ensures almost sure convergence while allowing persistent stochastic fluctuations over finite horizons.

\subsection{Finite-Horizon Parameter Uncertainty}
\label{app:decay:param_lb}

Although $\theta_t$ converges almost surely to a random limit $\theta_\infty$,
for any finite time $t$ there remains residual uncertainty about $\theta_\infty$
due to future stochastic gradient noise not yet revealed in $\mathcal{F}_t$.
Unrolling the linearized error dynamics yields
\begin{equation}
\theta_\infty - \theta_t
=
-\sum_{k=t}^{\infty}\Phi_{\infty\leftarrow(k+1)}\,\eta_k\,\xi_k ,
\end{equation}
where $\Phi_{\infty\leftarrow(k+1)}$ denotes the state-transition operator of the linearized dynamics.
Conditioned on $\mathcal{F}_t$, the future noise terms $\{\xi_k\}_{k\ge t}$ remain random, implying
\begin{equation}
\mathrm{Cov}(\theta_\infty\mid\mathcal{F}_t)
\succeq
\sum_{k=t}^{\infty}\eta_k^2\,
\Phi_{\infty\leftarrow(k+1)}\,\Sigma_\xi\,\Phi_{\infty\leftarrow(k+1)}^\top .
\end{equation}

Local stability ensures that the transition operators preserve at least one noise-excited direction.
Moreover, the tail of the squared step sizes satisfies
\begin{equation}
\sum_{k=t}^{\infty}\eta_k^2 = \Theta\!\left(t^{-(2\rho-1)}\right),
\end{equation}
which yields a constant $K_\theta>0$ such that
\begin{equation}
\mathbb{E}\!\left[\mathrm{Tr}\big(\mathrm{Cov}(\theta_\infty\mid\mathcal{F}_t)\big)\right]
\ge K_\theta\, t^{-(2\rho-1)} .
\label{eq:param_lb_app}
\end{equation}

\subsection{From Parameter Fluctuations to Observable Outcome Variability}
\label{app:decay:delta}

Let the evaluation outcome be modeled as
\begin{equation}
R = R(\theta_\infty, \zeta),
\end{equation}
where $\zeta$ represents task-level or evaluation-level randomness independent of the optimization noise.
Assume that $R(\theta,\zeta)$ is locally smooth in $\theta$ and that
$\nabla_\theta R(\theta^\star,\zeta)$ has non-zero projection onto a noise-excited direction with non-zero probability.

Conditioned on $\mathcal{F}_t$, uncertainty in $\theta_\infty$ induces variability in the conditional distribution of $R$.
A first-order expansion yields
\begin{equation}
\mathrm{Var}(R\mid\mathcal{F}_t)
\;\gtrsim\;
\nabla R(\theta^\star)^\top
\mathrm{Cov}(\theta_\infty\mid\mathcal{F}_t)
\nabla R(\theta^\star)
\;+\; \mathrm{Var}(\zeta),
\end{equation}
where the second term represents irreducible task-level variability.
Taking expectations and combining with \eqref{eq:param_lb_app} yields
\begin{equation}
\mathbb{E}\!\left[\mathrm{Var}(R\mid\mathcal{F}_t)\right]
\;\gtrsim\;
K_R\, t^{-(2\rho-1)} + \text{(irreducible terms)} .
\label{eq:delta_lb_app}
\end{equation}

\subsection{Implication for the Optimization-Induced Uncertainty Envelope}
\label{app:decay:main}

Recall that the optimization-induced uncertainty $\mathcal{V}_{opt}(c)$ captures the excess uncertainty attributable to observing only a finite prefix of the optimization trajectory.
Identifying $\mathcal{I}_c$ with $\mathcal{F}_t$ up to reparameterization,
the residual uncertainty arising from future stochastic fluctuations yields an asymptotic envelope
\begin{equation}
\mathcal{V}_{opt}(c) \;\gtrsim\; K\, c^{-\alpha},
\qquad
\alpha = 2\rho-1 \in (0,1] ,
\end{equation}
for some constant $K>0$.
We emphasize that this expression characterizes a \emph{rate-limiting asymptotic behavior} of observable uncertainty contraction,
rather than a tight bound on the conditional variance given infinite information.

The explicit form $\alpha=2\rho-1$ arises from the specific stochastic approximation regime analyzed here.
In practice, different optimization schedules, preconditioning, and non-stationary phases can induce substantially different effective decay rates,
consistent with the empirical observations reported in Section~\ref{sec:experiments}.

\section{Proofs for Budget-Optimal Allocation and the Efficiency Frontier}
\label{app:budget}

This appendix provides the detailed derivations and proofs for the budget-optimal allocation results in Section~\ref{sec:optimization}. We emphasize that the core object being optimized is the \emph{reducible uncertainty} $\mathcal{V}_{opt}(c)$ governed by the uncertainty decay dynamics in Section~\ref{sec:dynamics}, while the intrinsic limit $\mathcal{L}_{int}$ remains invariant to compute. We present results in increasing generality: first for a general monotone cost model, then for the linear cost specialization used in the main text, and finally for the geometry of the efficiency frontier.

\subsection{General Budgeted Risk Minimization}
\label{app:budget:general}

Recall the Bayes risk decomposition (Proposition~\ref{prop:decomposition}):
\begin{equation}
\mathcal{L}(c)=\mathcal{L}_{int}+\mathcal{V}_{opt}(c),
\end{equation}
where $\mathcal{L}_{int}$ is constant in $c$ and $\mathcal{V}_{opt}(c)$ is non-increasing in $c$ (Appendix~\ref{app:decomposition}). Let $C(c)$ denote the cost of acquiring $\mathcal{I}_c$ and assume:

\begin{assumption}[Admissible cost model]
\label{ass:admissible_cost}
The cost function $C:[0,\infty)\to[0,\infty)$ is continuous, strictly increasing, and satisfies $C(0)=C_s$.
\end{assumption}

Given a total budget $B\ge C_s$, the decision problem is
\begin{equation}
\label{eq:budget_problem_general}
\min_{c\ge 0}\ \mathcal{L}_{int}+\mathcal{V}_{opt}(c)
\quad \text{s.t.}\quad C(c)\le B.
\end{equation}

\begin{lemma}[Budget is optimally saturated]
\label{lem:budget_saturation}
Assume $\mathcal{V}_{opt}(c)$ is non-increasing and $C(c)$ is strictly increasing. Then any optimizer $c^\star$ of \eqref{eq:budget_problem_general} satisfies
\begin{equation}
C(c^\star)=B,
\end{equation}
unless $\mathcal{V}_{opt}(c)$ is flat on $[0,\bar c]$ for some $\bar c>0$ and $B$ is large enough to exceed $C(\bar c)$.
\end{lemma}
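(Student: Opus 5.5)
We argue by contradiction, using only the monotonicity of $\mathcal{V}_{opt}$ (Appendix~\ref{app:decomposition}) and the continuity and strict monotonicity of $C$ from Assumption~\ref{ass:admissible_cost}. First we make the feasible set explicit. Since $C$ is continuous, strictly increasing and $C(0)=C_s\le B$, the set $\{c\ge 0: C(c)\le B\}$ is an interval $[0,c_B]$, where $c_B\triangleq\sup\{c: C(c)\le B\}$. If $B\ge\sup_c C(c)$, then $c_B=\infty$ and the feasible set is all of $[0,\infty)$; we treat that degenerate case separately. When $c_B<\infty$, continuity gives $C(c_B)=B$, and strict monotonicity makes $c_B$ the unique point with $C(c)=B$. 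So the claim $C(c^\star)=B$ is equivalent to $c^\star=c_B$.

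Now suppose $c^\star$ is an optimizer with $C(c^\star)<B$, so $c^\star<c_B$. Because $\mathcal{V}_{opt}$ is non-increasing, every $c\in[c^\star,c_B]$ is feasible and satisfies $\mathcal{V}_{opt}(c)\le\mathcal{V}_{opt}(c^\star)$. Optimality of $c^\star$ gives the reverse inequality, so $\mathcal{V}_{opt}$ is constant on $[c^\star,c_B]$. We use monotonicity once more: for every $c\in[0,c^\star]$ we have $\mathcal{V}_{opt}(c)\ge\mathcal{V}_{opt}(c^\star)$. Hence $\mathcal{V}_{opt}$ attains its minimum over the feasible set on the whole interval $[c^\star,c_B]$, and $c^\star$ being an optimizer strictly below the budget requires this stretch of zero marginal gain.

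The main obstacle is matching this conclusion to the exceptional clause as it is stated. The clause speaks of flatness on an initial interval $[0,\bar c]$, whereas the argument above only yields flatness on a terminal interval $[c^\star,c_B]$ of the feasible region. I would handle this in one of two ways. One option is to state the exception as ``$\mathcal{V}_{opt}$ is flat on some nondegenerate subinterval ending at $c_B$,'' which is exactly what the proof produces and which contains the stated case. The alternative is to note that among the optimizers, the largest one, $c_B$, always saturates the budget. In that form the lemma holds with the exception read as: a non-saturating optimizer exists only when $\mathcal{V}_{opt}$ stops decreasing before the budget is exhausted. If $c^\star=0$ the flat interval does start at $0$, so in that case the statement's form is recovered exactly with $\bar c=c_B$ and $B\ge C(\bar c)$.

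Finally, in the degenerate case $c_B=\infty$, an optimizer with $C(c^\star)<B$ can exist only if $\mathcal{V}_{opt}$ reaches its infimum at a finite point. Beyond that point it must be flat, so this case also falls under the exception. Under the power-law model of Theorem~\ref{thm:optimal_probing}, $\mathcal{V}_{opt}$ is strictly decreasing, no flat stretch exists, and saturation $C(c^\star)=B$ holds unconditionally.
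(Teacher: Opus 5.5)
Your argument is correct and is the same as the paper's: assume $C(c^\star)<B$, use the budget slack to move $c$ to the right, and invoke that $\mathcal{V}_{opt}$ is non-increasing --- the paper only takes a small step to $c^\star+\delta$ and concludes that strict improvement holds ``unless $\mathcal{V}_{opt}$ is locally flat,'' while you push the comparison all the way to $c_B$ and also handle the unbounded feasible set. Your diagnosis of the exception clause is also right: both arguments only force flatness on $[c^\star,c_B]$, not on an initial interval $[0,\bar c]$, and the literal statement fails, e.g., when $\mathcal{V}_{opt}$ strictly decreases on $[0,1]$ and is constant on $[1,c_B]$ with $c_B>1$ (any $c^\star\in[1,c_B)$ is then a non-saturating optimizer), so your reformulated exception is the one actually proved.
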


\paragraph{Proof.}
If $C(c^\star)<B$, then there exists $\delta>0$ such that $C(c^\star+\delta)\le B$ by continuity and strict monotonicity of $C(\cdot)$. Since $\mathcal{V}_{opt}(\cdot)$ is non-increasing, $\mathcal{V}_{opt}(c^\star+\delta)\le \mathcal{V}_{opt}(c^\star)$, yielding a weakly better feasible point. Strict improvement holds unless $\mathcal{V}_{opt}$ is locally flat. \qed

Lemma~\ref{lem:budget_saturation} formalizes the economic intuition: if probing is beneficial (strictly reduces uncertainty), one should use the full available budget.

\subsection{KKT Derivation Under a Canonical Power-Law Model}
\label{app:budget:kkt}

The main text adopts a canonical model for the reducible uncertainty:
\begin{equation}
\label{eq:power_law_app}
\mathcal{V}_{opt}(c)=Kc^{-\alpha},\qquad K>0,\ \alpha>0,
\end{equation}
valid in an early-to-mid regime beyond a minimal onset $c_0$ (Section~\ref{sec:dynamics}). We first derive the optimal allocation under a general differentiable cost model.

\begin{assumption}[Differentiable cost]
\label{ass:diff_cost}
$C(\cdot)$ is differentiable and strictly increasing, with derivative $C'(c)>0$ for all $c\ge 0$.
\end{assumption}

Consider the Lagrangian for \eqref{eq:budget_problem_general} under \eqref{eq:power_law_app}:
\begin{equation}
\mathcal{J}(c,\gamma)
=
\mathcal{L}_{int}+Kc^{-\alpha}+\gamma\,(C(c)-B),
\qquad \gamma\ge 0.
\end{equation}

\begin{theorem}[KKT characterization for budget-optimal probing]
\label{thm:kkt_general}
Under Assumptions~\ref{ass:admissible_cost} and \ref{ass:diff_cost}, and $\mathcal{V}_{opt}(c)=Kc^{-\alpha}$ with $\alpha>0$, any interior optimum $c^\star>0$ satisfies the KKT stationarity condition
\begin{equation}
\label{eq:kkt_general}
\left|\frac{d}{dc}\mathcal{V}_{opt}(c)\right|_{c=c^\star}
=
\gamma^\star\, C'(c^\star),
\end{equation}
together with complementary slackness
\begin{equation}
\gamma^\star\,(C(c^\star)-B)=0,\qquad \gamma^\star\ge 0,\qquad C(c^\star)\le B.
\end{equation}
\end{theorem}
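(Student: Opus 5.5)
The plan is to treat \eqref{eq:budget_problem_general} with $\mathcal{V}_{opt}(c)=Kc^{-\alpha}$ as a one-dimensional inequality-constrained program $\min_{c>0} F(c)$ subject to $g(c)\le 0$, where $F(c)=\mathcal{L}_{int}+Kc^{-\alpha}$ and $g(c)=C(c)-B$. We then invoke the standard first-order KKT theorem. On $(0,\infty)$ both $F$ and $g$ are differentiable: $F'(c)=-\alpha Kc^{-\alpha-1}<0$, and $g'(c)=C'(c)>0$ by Assumption~\ref{ass:diff_cost}. Here ``interior'' means $c^\star>0$, so the constraint $c\ge 0$ is inactive and only $g$ has to be handled.

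\textbf{Constraint qualification.} Since $g'(c^\star)=C'(c^\star)>0$, the gradient of the single constraint is nonzero wherever it is active. LICQ therefore holds at every feasible point. This guarantees that a multiplier $\gamma^\star\ge 0$ exists at any local minimizer, with
\[
F'(c^\star)+\gamma^\star C'(c^\star)=0,\qquad \gamma^\star(C(c^\star)-B)=0,\qquad C(c^\star)\le B.
\]

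\textbf{Rewriting stationarity.} Because $F'(c^\star)=\frac{d}{dc}\mathcal{V}_{opt}(c^\star)<0$, we have $-F'(c^\star)=\bigl|\frac{d}{dc}\mathcal{V}_{opt}(c)\bigr|_{c=c^\star}$. This turns the stationarity equation directly into \eqref{eq:kkt_general}.

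\textbf{Direct argument as a check.} We can also avoid citing the KKT theorem and split on whether the constraint binds.
\begin{itemize}
\item If $C(c^\star)<B$, then continuity of $C$ makes a small neighborhood of $c^\star$ feasible. Unconstrained first-order optimality would then force $F'(c^\star)=0$, which is impossible since $F'<0$. So this case cannot occur, consistent with Lemma~\ref{lem:budget_saturation}.
\item Hence $C(c^\star)=B$. Define $\gamma^\star:=\alpha K (c^\star)^{-\alpha-1}/C'(c^\star)$, which is strictly positive. Stationarity holds by construction, and complementary slackness holds because the constraint is active.
\end{itemize}
This also shows that $\gamma^\star$ is uniquely determined and strictly positive, which links it to the penalty $\gamma$ in Theorem~\ref{thm:optimal_probing}.

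\textbf{Main obstacle.} The only delicate point is the meaning of ``interior''. It must exclude $c=0$, where $Kc^{-\alpha}$ blows up and $F$ is not differentiable. Once we restrict to $c^\star>0$, everything is smooth. The same computation for the penalized objective \eqref{eq:tradeoff_objective} gives Theorem~\ref{thm:optimal_probing} with $\gamma$ in place of $\gamma^\star$. In the linear case $C'=\lambda$, solving $\alpha K c^{-\alpha-1}=\gamma\lambda$ yields \eqref{eq:kkt_solution}.
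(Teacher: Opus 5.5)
Your proof is correct and follows the paper's route: a standard single-constraint KKT argument, with the constraint qualification supplied by $C'(c^\star)>0$, and the stationarity equation $-\alpha K (c^\star)^{-(\alpha+1)}+\gamma^\star C'(c^\star)=0$ rewritten using that $\mathcal{V}_{opt}$ is decreasing. Your direct case split adds a sound check the paper leaves implicit (it only appears via Lemma~\ref{lem:budget_saturation}): the budget must bind, and $\gamma^\star=\alpha K (c^\star)^{-\alpha-1}/C'(c^\star)>0$.
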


\paragraph{Proof.}
This is a standard application of KKT conditions for a single inequality constraint. Since $C(\cdot)$ is differentiable and strictly increasing, the constraint qualification holds. Stationarity requires $\partial \mathcal{J}/\partial c=0$:
\[
-\alpha K c^{-(\alpha+1)}+\gamma C'(c)=0,
\]
equivalently $\alpha K c^{-(\alpha+1)}=\gamma C'(c)$, which is \eqref{eq:kkt_general} since $\mathcal{V}_{opt}$ is decreasing. The remaining KKT conditions are feasibility and complementary slackness. \qed

\paragraph{Economic interpretation.}
Equation~\eqref{eq:kkt_general} states a marginal equilibrium: the \emph{rate of uncertainty reduction} per unit compute equals the \emph{shadow-price-adjusted marginal cost}. This is the precise sense in which optimal probing is not a fixed step count: it is the equilibrium point induced by the governing dynamics (through $\alpha$ and $K$) and the cost geometry (through $C'(c)$).

\subsection{Closed-Form Solution Under Linear Cost}
\label{app:budget:linear}

We now specialize to the linear cost model used in Section~\ref{sec:optimization}:
\begin{equation}
\label{eq:linear_cost_app}
C(c)=C_s+\lambda c,\qquad \lambda>0.
\end{equation}
The feasible set is $0\le c\le (B-C_s)/\lambda$.

\begin{corollary}[Budget-optimal compute under linear cost]
\label{cor:linear_closed_form}
Under \eqref{eq:power_law_app} and \eqref{eq:linear_cost_app}, the optimizer is
\begin{equation}
\label{eq:cstar_linear}
c^\star
=
\frac{B-C_s}{\lambda},
\end{equation}
and the achieved risk is
\begin{equation}
\label{eq:risk_linear}
\mathcal{L}(c^\star)=\mathcal{L}_{int}+K\left(\frac{B-C_s}{\lambda}\right)^{-\alpha}.
\end{equation}
\end{corollary}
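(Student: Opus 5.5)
The plan is to treat \eqref{eq:budget_problem_general} under \eqref{eq:power_law_app} and \eqref{eq:linear_cost_app} as a one-dimensional problem on the feasible interval $[0,\bar c]$ with $\bar c \triangleq (B-C_s)/\lambda$. I will show that the objective is strictly decreasing there, so the minimizer is the right endpoint. The intrinsic term $\mathcal{L}_{int}$ is constant in $c$ by Proposition~\ref{prop:decomposition}, so it can be dropped from the minimization. The problem then reduces to minimizing $Kc^{-\alpha}$ over $c\in[0,\bar c]$. Feasibility of $\bar c$ is immediate: $C(\bar c)=C_s+\lambda\bar c=B$, and $\bar c\ge 0$ since $B\ge C_s$.

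First, I will record that $\frac{d}{dc}Kc^{-\alpha}=-\alpha Kc^{-(\alpha+1)}<0$ for all $c>0$, because $K>0$ and $\alpha>0$. Hence $\mathcal{V}_{opt}$ is strictly decreasing, not merely non-increasing, and in particular it is nowhere flat. This places us in the non-degenerate case of Lemma~\ref{lem:budget_saturation}: any optimizer must saturate the budget, $C(c^\star)=B$. Since $C$ is strictly increasing and affine, this equation has the unique solution $c^\star=\bar c$, which gives \eqref{eq:cstar_linear}.

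As a cross-check via Theorem~\ref{thm:kkt_general}, I will take $C'(c)=\lambda$ in \eqref{eq:kkt_general}, which gives $\gamma^\star=\alpha K (c^\star)^{-(\alpha+1)}/\lambda>0$. Complementary slackness then forces $C(c^\star)=B$, confirming the same point. I will also note that this is distinct from the closed form \eqref{eq:kkt_solution} in the main text. There $\gamma$ is an exogenous tradeoff weight from \eqref{eq:tradeoff_objective}, whereas here $\gamma^\star$ is the shadow price determined by $B$. Substituting $c^\star$ into $\mathcal{L}_{int}+Kc^{-\alpha}$ yields \eqref{eq:risk_linear}.

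The main obstacle is the boundary behaviour and the domain of the power law. $Kc^{-\alpha}$ blows up at $c=0$, which is harmless for minimization, but the edge case $B=C_s$ gives $\bar c=0$, where the risk formula is infinite. I would handle this by assuming $B>C_s$, or by restricting to the regime $c\ge c_0$ where \eqref{eq:power_law_app} is stated to hold, with the requirement $\bar c\ge c_0$. Under either restriction the strict monotonicity argument goes through unchanged.
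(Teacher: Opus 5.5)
Your proposal is correct and follows the paper's proof, which observes that $Kc^{-\alpha}$ is strictly decreasing on the feasible interval $[0,(B-C_s)/\lambda]$, takes the right endpoint as the minimizer, and substitutes it into the risk; the detour through Lemma~\ref{lem:budget_saturation} and the KKT cross-check are consistent but not needed. Your caveat that $B>C_s$ (or $(B-C_s)/\lambda\ge c_0$) is needed for the risk formula to be finite is a genuine point that the paper's statement glosses over, since the paper only assumes $B\ge C_s$.
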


\paragraph{Proof.}
Since $\mathcal{V}_{opt}(c)=Kc^{-\alpha}$ is strictly decreasing in $c$ for $\alpha>0$ and the feasible set is an interval, the minimizer is the largest feasible $c$. Under \eqref{eq:linear_cost_app} this is exactly \eqref{eq:cstar_linear}. Substituting into $\mathcal{L}(c)$ yields \eqref{eq:risk_linear}. \qed

\paragraph{Remark (why the main text may introduce a stopping rule).}
Corollary~\ref{cor:linear_closed_form} formalizes the simplest budgeted setting: if the only objective is to reduce prediction risk and the cost is linear, then one uses the full budget (Lemma~\ref{lem:budget_saturation}). In practice, one often introduces an additional decision criterion, such as minimizing a \emph{joint} objective that trades risk against compute (Appendix~\ref{app:budget:tradeoff}), or stopping when marginal gains fall below a tolerance (Appendix~\ref{app:decay}). The ``optimal stopping'' viewpoint in the main text corresponds to these economically meaningful variants.

\subsection{Risk--Cost Tradeoff Formulation and the Equilibrium Condition}
\label{app:budget:tradeoff}

A complementary way to formalize ``optimal stopping'' is to treat compute as costly directly in the objective. Define the tradeoff objective
\begin{equation}
\label{eq:tradeoff_problem}
\min_{c\ge 0}\ \mathcal{L}(c)+\gamma\, C(c),
\qquad \gamma>0,
\end{equation}
where $\gamma$ controls the exchange rate between risk and compute. This formulation is equivalent to the constrained problem \eqref{eq:budget_problem_general} via Lagrangian duality and yields a clean equilibrium condition.

\begin{proposition}[Equilibrium condition under risk--cost tradeoff]
\label{prop:tradeoff_equilibrium}
Under \eqref{eq:power_law_app} and differentiable $C(\cdot)$ with $C'(c)>0$, any interior minimizer $c^\star>0$ of \eqref{eq:tradeoff_problem} satisfies
\begin{equation}
\left|\frac{d}{dc}\mathcal{V}_{opt}(c)\right|_{c=c^\star}
=
\gamma\, C'(c^\star).
\end{equation}
In particular, under linear cost $C'(c)=\lambda$,
\begin{equation}
\label{eq:tradeoff_closed_form}
c^\star
=
\left(\frac{\alpha K}{\gamma\lambda}\right)^{\frac{1}{\alpha+1}}.
\end{equation}
\end{proposition}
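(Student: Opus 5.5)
The plan is a first-order optimality argument for the unconstrained scalar problem \eqref{eq:tradeoff_problem}, followed by solving the resulting stationarity equation explicitly in the linear-cost case. Write the objective as
\[
F(c)=\mathcal{L}_{int}+Kc^{-\alpha}+\gamma C(c),
\]
using the decomposition of Proposition~\ref{prop:decomposition}. The term $\mathcal{L}_{int}$ does not depend on $c$, so it drops out of every derivative. On $(0,\infty)$, $F$ is differentiable: $Kc^{-\alpha}$ is smooth there, and $C$ is differentiable by assumption.

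\textbf{Stationarity.} Let $c^\star>0$ be an interior minimizer. Then $c^\star$ lies in an open interval contained in the domain, so Fermat's theorem gives $F'(c^\star)=0$, that is,
\[
-\alpha K (c^\star)^{-(\alpha+1)}+\gamma C'(c^\star)=0 .
\]
Since $\frac{d}{dc}\mathcal{V}_{opt}(c)=-\alpha K c^{-(\alpha+1)}<0$ for $K,\alpha>0$, we have $\bigl|\frac{d}{dc}\mathcal{V}_{opt}(c^\star)\bigr|=\alpha K(c^\star)^{-(\alpha+1)}$. Substituting this into the stationarity equation gives the stated equilibrium condition.

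\textbf{Linear cost.} With $C'(c)=\lambda$, the condition reads $\alpha K (c^\star)^{-(\alpha+1)}=\gamma\lambda$. All quantities are positive, so $(c^\star)^{\alpha+1}=\alpha K/(\gamma\lambda)$. Taking the positive $(\alpha+1)$-th root gives \eqref{eq:tradeoff_closed_form}.

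This root is the unique critical point, and it really is the global minimizer, not just a candidate:
\begin{itemize}
\item $F'(c)=-\alpha Kc^{-(\alpha+1)}+\gamma\lambda$ is strictly increasing in $c$, so it vanishes at most once.
\item Equivalently, $F''(c)=\alpha(\alpha+1)Kc^{-(\alpha+2)}>0$, so $F$ is strictly convex on $(0,\infty)$.
\item $F(c)\to\infty$ both as $c\to0^+$ and as $c\to\infty$.
\end{itemize}
Hence an interior minimizer exists, and it must coincide with this root.

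\textbf{Main subtlety.} The only delicate point is what "interior" means given that the power-law model blows up at $c=0$. For general $C$ the statement is only a necessary condition, so no existence or uniqueness claim is needed there. The boundary $c=0$ is excluded automatically, because $Kc^{-\alpha}\to\infty$ as $c\to0^+$. If the model is taken to hold only for $c\ge c_0$, then one must also check the boundary case $c^\star=c_0$ separately. I would note this as a remark rather than fold it into the proposition.
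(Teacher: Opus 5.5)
Your proposal is correct and follows the same route as the paper: differentiate $\mathcal{L}(c)+\gamma C(c)$, drop the constant $\mathcal{L}_{int}$, set $-\alpha K c^{-(\alpha+1)}+\gamma C'(c)=0$ at the interior minimizer, and solve explicitly when $C'(c)=\lambda$. Your strict-convexity and boundary-behavior check, which shows the root is the unique global minimizer in the linear case, goes slightly beyond the paper; the paper only derives the stationarity condition.
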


\paragraph{Proof.}
The derivative of the objective in \eqref{eq:tradeoff_problem} is
\[
\frac{d}{dc}\mathcal{L}(c)+\gamma C'(c)
=
\frac{d}{dc}\mathcal{V}_{opt}(c)+\gamma C'(c).
\]
Setting this to zero yields the equilibrium condition, and substituting $\mathcal{V}_{opt}(c)=Kc^{-\alpha}$ gives $-\alpha Kc^{-(\alpha+1)}+\gamma C'(c)=0$. For linear cost, solving yields \eqref{eq:tradeoff_closed_form}. \qed

\paragraph{Connection to Theorem~\ref{thm:kkt_general}.}
Proposition~\ref{prop:tradeoff_equilibrium} can be viewed as the unconstrained primal associated with the KKT multiplier interpretation: $\gamma$ plays the role of a shadow price for compute, and \eqref{eq:tradeoff_closed_form} matches the stationarity form used in the main text.

\subsection{Efficiency Frontier Geometry}
\label{app:budget:frontier}

We now characterize the Pareto frontier in the risk--cost plane. Let $c$ index operating points. Under \eqref{eq:linear_cost_app}, define
\begin{equation}
x(c)\triangleq C(c)=C_s+\lambda c,
\qquad
y(c)\triangleq \mathcal{L}(c)=\mathcal{L}_{int}+Kc^{-\alpha}.
\end{equation}
Eliminating $c=(x-C_s)/\lambda$ yields an explicit frontier:
\begin{equation}
\label{eq:frontier_explicit}
y(x)=\mathcal{L}_{int}+K\left(\frac{x-C_s}{\lambda}\right)^{-\alpha},
\qquad x>C_s.
\end{equation}

\begin{proposition}[Frontier slope and curvature]
\label{prop:frontier_curvature}
For $x>C_s$, the frontier \eqref{eq:frontier_explicit} is strictly decreasing and convex. Its slope and curvature are
\begin{align}
\label{eq:frontier_slope}
\frac{dy}{dx}
&=
-\alpha K\,\lambda^\alpha\,(x-C_s)^{-(\alpha+1)},\\
\label{eq:frontier_curv}
\frac{d^2y}{dx^2}
&=
\alpha(\alpha+1)K\,\lambda^\alpha\,(x-C_s)^{-(\alpha+2)}.
\end{align}
\end{proposition}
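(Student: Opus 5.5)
The plan is to treat the frontier as an explicit one-variable function of $x$ and differentiate it directly. Set $u \triangleq x - C_s$. On the domain $x > C_s$ we have $u>0$, and the frontier \eqref{eq:frontier_explicit} becomes
\[
y(x) = \mathcal{L}_{int} + K\lambda^{\alpha} u^{-\alpha},
\]
because $\left(u/\lambda\right)^{-\alpha} = \lambda^{\alpha}u^{-\alpha}$ and $\lambda>0$. In this form the additive constant $\mathcal{L}_{int}$ disappears under differentiation. This is consistent with Proposition~\ref{prop:decomposition}, where the intrinsic limit does not depend on computation.

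The first derivative follows from the power rule with $du/dx = 1$:
\[
\frac{dy}{dx} = -\alpha K \lambda^{\alpha} u^{-(\alpha+1)},
\]
which is \eqref{eq:frontier_slope}. Differentiating once more gives
\[
\frac{d^2y}{dx^2} = \alpha(\alpha+1)K\lambda^{\alpha}u^{-(\alpha+2)},
\]
which is \eqref{eq:frontier_curv}.

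The qualitative claims then follow from the signs of these expressions. We have $K>0$, $\alpha>0$ (so also $\alpha+1>0$), $\lambda>0$ and $u>0$. Hence every factor $\lambda^{\alpha}$, $u^{-(\alpha+1)}$ and $u^{-(\alpha+2)}$ is strictly positive.
\begin{itemize}
\item The slope is strictly negative on $(C_s,\infty)$. Together with the mean value theorem, this shows $y$ is strictly decreasing.
\item The second derivative is strictly positive on the open interval. By the standard second-order characterization for twice-differentiable functions, $y$ is (strictly) convex there.
\end{itemize}

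The only point needing care is to keep the domain $x>C_s$ open, so that $u^{-\alpha}$ is smooth and finite. There is no substantive obstacle beyond this bookkeeping. As a consistency check, setting the slope equal to $-\gamma$ (the tangency condition for the tradeoff objective \eqref{eq:tradeoff_problem}, where $\mathcal{L}+\gamma x$ is minimized) should recover $c^\star = (\alpha K/(\gamma\lambda))^{1/(\alpha+1)}$. Note that $dy/dx = (dy/dc)/\lambda$, so the tangency condition is equivalent to the equilibrium condition in Proposition~\ref{prop:tradeoff_equilibrium}.
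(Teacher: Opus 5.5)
Your proposal is correct and is essentially the paper's proof: rewrite the frontier as $\mathcal{L}_{int}+K\lambda^{\alpha}(x-C_s)^{-\alpha}$, differentiate twice, and read off strict decrease and convexity from the signs of the derivatives. Your extra justifications are correct details the paper leaves implicit: the mean value theorem, the second-order convexity criterion, and the tangency check that recovers $c^\star$.
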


\paragraph{Proof.}
Differentiate \eqref{eq:frontier_explicit} directly:
\[
y(x)=\mathcal{L}_{int}+K\lambda^\alpha (x-C_s)^{-\alpha}.
\]
Then
\[
\frac{dy}{dx}=-\alpha K\lambda^\alpha (x-C_s)^{-(\alpha+1)}<0,
\]
and
\[
\frac{d^2y}{dx^2}=\alpha(\alpha+1)K\lambda^\alpha (x-C_s)^{-(\alpha+2)}>0.
\]
Thus the frontier is decreasing and convex. \qed

\paragraph{Interpretation.}
Convexity formalizes diminishing returns: as cost increases, each additional unit of compute yields smaller risk reduction. The exponent $\alpha$ governs the global shape: smaller $\alpha$ produces a ``long tail'' frontier where uncertainty resolves slowly, while larger $\alpha$ yields a sharper early drop.

\subsection{Minimal Effective Regime via a Marginal-Gain Criterion}
\label{app:budget:effective}

A step-count-free definition of an ``effective probing regime'' can be derived by thresholding the marginal gain. Under linear cost, the marginal risk reduction per unit cost equals
\begin{equation}
\left|\frac{dy}{dx}\right|
=
\alpha K\,\lambda^\alpha\,(x-C_s)^{-(\alpha+1)}.
\end{equation}
Given a tolerance $\varepsilon>0$ (risk reduction per unit cost), define the effective boundary $x_{\mathrm{eff}}(\varepsilon)$ as the smallest cost beyond which the marginal gain falls below $\varepsilon$:
\begin{equation}
x_{\mathrm{eff}}(\varepsilon)
\triangleq
\inf\left\{x>C_s:\ \left|\frac{dy}{dx}\right|\le \varepsilon\right\}.
\end{equation}
Solving yields
\begin{equation}
\label{eq:xeff}
x_{\mathrm{eff}}(\varepsilon)
=
C_s+\left(\frac{\alpha K\,\lambda^\alpha}{\varepsilon}\right)^{\frac{1}{\alpha+1}}.
\end{equation}
Equivalently, in compute units,
\begin{equation}
\label{eq:ceff}
c_{\mathrm{eff}}(\varepsilon)
=
\frac{x_{\mathrm{eff}}(\varepsilon)-C_s}{\lambda}
=
\left(\frac{\alpha K}{\varepsilon\lambda}\right)^{\frac{1}{\alpha+1}},
\end{equation}
which matches the structural ``elbow'' threshold derived from the governing dynamics (cf.\ Appendix~\ref{app:decay}). This criterion depends on $\alpha$ and $K$ rather than any fixed step count.

\subsection{Optional Extension: Joint Allocation Between Static and Dynamic Acquisition}
\label{app:budget:static_dynamic}

The main text emphasizes that static and dynamic information play distinct functional roles (Section~\ref{sec:theory}). For completeness, we record a simple joint allocation model in which the practitioner can invest in static acquisition (e.g., richer static descriptors) at an explicit cost.

Let $m\ge 0$ denote a scalar control for static acquisition (e.g., feature complexity, evaluation resolution), with cost $C_s(m)$ and intrinsic limit $\mathcal{L}_{int}(m)$ decreasing in $m$. The joint allocation problem is
\begin{equation}
\label{eq:joint_alloc}
\min_{m\ge 0,\ c\ge 0}\ \mathcal{L}_{int}(m)+\mathcal{V}_{opt}(c)
\quad \text{s.t.}\quad C_s(m)+C_d(c)\le B.
\end{equation}
Assuming differentiability and an interior solution, the KKT conditions yield the marginal-equalization principle:
\begin{equation}
\label{eq:joint_kkt}
-\frac{d}{dm}\mathcal{L}_{int}(m^\star)
=
\gamma^\star\, C_s'(m^\star),
\qquad
\left|\frac{d}{dc}\mathcal{V}_{opt}(c)\right|_{c=c^\star}
=
\gamma^\star\, C_d'(c^\star),
\end{equation}
with a shared multiplier $\gamma^\star$.
Equation~\eqref{eq:joint_kkt} states that, at optimum, each dollar of budget buys the same reduction in risk regardless of whether it is spent on improving static priors or extending dynamic observation. This recovers the qualitative guidance emphasized in the main text: budgets should be allocated according to \emph{marginal} risk reduction, and the governing dynamics parameter $\alpha$ directly controls the dynamic marginal returns.

\section{Identifiability of Governing Dynamics and Regime Structure}
\label{app:alpha_identifiability}

This appendix addresses the \emph{identifiability} of the governing quantities introduced in the main text,
namely the effective uncertainty decay rate $\alpha$ and the intrinsic risk floor $\mathcal{L}_{\mathcal T,int}$.
Our goal is to establish that these quantities are \emph{empirically characterizable as structural properties}
of a task--optimizer configuration, using finite observations collected offline.

Importantly, this appendix does \textbf{not} propose a deployment-time estimation protocol for individual fine-tuning runs.
Rather, $\alpha$ and $\mathcal{L}_{\mathcal T,int}$ are treated as task-level descriptors,
analogous to learning-curve exponents in neural scaling laws,
which are typically estimated under controlled experimental conditions and reused for analysis and comparison.

\subsection{Identifiability of the Uncertainty Decay Rate}

As shown in Section~\ref{sec:dynamics}, optimization-induced uncertainty obeys a rate-limiting asymptotic envelope of the form
\begin{equation}
\mathcal{V}_{opt}(c) \;\gtrsim\; K\, c^{-\alpha}.
\end{equation}
This result characterizes how rapidly uncertainty can contract with increasing computation,
but does not require direct observation of $\mathcal{V}_{opt}(c)$ itself.

\paragraph{Observable uncertainty proxies.}
In practice, we observe surrogate quantities that reflect optimization-induced variability under controlled randomness.
For a fixed task--optimizer configuration $\mathcal{T}=(M,D,\mathcal{A})$ and a probing budget $c$,
consider repeated probes that differ only in stochastic optimization noise (e.g., random seeds).
Let $U_i(c)$ denote an observable outcome proxy derived from the $i$-th probe,
such as prediction dispersion or run-to-run variability.
An empirical uncertainty proxy is then given by
\begin{equation}
\widehat{U}(c)
=
\frac{1}{N-1}
\sum_{i=1}^{N}
\left(
U_i(c)
-
\overline{U}(c)
\right)^2,
\qquad
\overline{U}(c)=\frac{1}{N}\sum_{i=1}^N U_i(c).
\end{equation}
While $\widehat{U}(c)$ does not equal $\mathcal{V}_{opt}(c)$,
it is required only to preserve the relative decay behavior induced by optimization dynamics.

\paragraph{Scaling-based identification.}
Once the optimization enters a regime in which uncertainty contraction is rate-limited,
the effective decay rate $\alpha$ can be identified by fitting the surrogate proxy to a power-law envelope:
\begin{equation}
\log \widehat{U}(c)
=
\log K
-
\alpha \log c
+
\varepsilon(c),
\end{equation}
where $\varepsilon(c)$ captures finite-sample effects and higher-order deviations.
This establishes identifiability of $\alpha$ as a \emph{scaling exponent of observable uncertainty},
without implying that such fitting must be performed online or per deployment instance.

\paragraph{Onset of the scaling regime.}
For small values of $c$, transient optimization dynamics may violate the asymptotic assumptions underlying the envelope.
Accordingly, identification of $\alpha$ is assessed over a computation range $[c_{\min},c_{\max}]$
in which empirical curvature stabilizes.
This choice affects estimation variance but does not alter the qualitative regime structure defined below.

\subsection{Identifiability of the Intrinsic Risk Floor}

The intrinsic limit $\mathcal{L}_{\mathcal T,int}$ represents the irreducible component of task-level prediction risk,
corresponding to uncertainty that persists even under asymptotically complete information.
In empirical analysis, $\mathcal{L}_{\mathcal T,int}$ is not observed directly,
but can be inferred as the intercept of the fitted uncertainty envelope.

Specifically, given a set of uncertainty proxies $\{\widehat{U}(c_i)\}$,
we jointly fit the model
\begin{equation}
\widehat{U}(c)
\approx
\mathcal{L}_{\mathcal T,int}
+
K c^{-\alpha},
\end{equation}
thereby avoiding reliance on saturation at a finite probing depth.
The resulting $\widehat{\mathcal{L}}_{\mathcal T,int}$ should be interpreted as an \emph{effective intrinsic floor}
consistent with the observable uncertainty decay, rather than as a precise asymptotic limit.

\subsection{Regime Structure and Assignment}

Given identifiable estimates $(\widehat{\alpha},\widehat{\mathcal{L}}_{\mathcal T,int})$,
tasks can be positioned within the predictability phase space introduced in Section~\ref{sec:phase_diagram}.
The regimes defined in the main text correspond to qualitative regions of this space,
rather than sharp decision boundaries or deployment-time classifiers.

\paragraph{Static-Sufficient Regime.}
Tasks with rapid uncertainty contraction (large $\widehat{\alpha}$)
or a dominant intrinsic floor exhibit minimal benefit from extended probing.
In this regime, static information suffices for effective prediction.

\paragraph{Dynamic-Critical Regime.}
Tasks characterized by slower uncertainty decay and a low intrinsic floor
possess substantial reducible uncertainty.
Here, dynamic observation plays a decisive role in resolving outcome variability,
and the efficiency frontier extends deep into the computational budget.

\paragraph{Noise-Dominant Regime.}
Tasks with a large intrinsic floor remain dominated by irreducible ambiguity,
such that additional probing yields diminishing returns regardless of depth.

\subsection{Interpretation and Scope}

We emphasize that regime assignment is inherently comparative and descriptive.
It does not require precise numerical thresholds,
nor does it assume that governing parameters must be estimated online for each deployment instance.
Instead, the phase diagram provides a \emph{mechanism-based organization} of predictability behavior across tasks,
grounded in empirically identifiable uncertainty dynamics.

This distinction separates the theoretical limits analyzed in this work
from questions of system design or real-time decision-making,
which lie beyond the present scope.

\section{Additional Experimental Details}
\label{app:experiments}

This appendix provides supplementary experimental details and analyses omitted from the main text due to space constraints.
Our goal is to ensure reproducibility of the empirical results in Section~\ref{sec:experiments}
and to clarify how the theoretical quantities introduced in the framework are instantiated through \emph{observable uncertainty proxies}.

We do not assume that the maximal probing depth $c_{\max}$ reaches the true asymptotic limit $c\to\infty$.
All reported quantities should be interpreted as effective parameters inferred
from the stable decay regime observable within the available computational budget.

All analyzes in this appendix support, but do not extend, the claims made in the main text.

\subsection{Tasks and Regime Coverage}
\label{app:tasks}

We evaluate pre-hoc fine-tuning predictability across a diverse collection of downstream tasks
and pretrained foundation models, spanning different objectives, data conditions,
model scales, and optimization behaviors.
Tasks are selected to ensure coverage of all three predictability regimes
introduced in Section~\ref{sec:phase_diagram}.

\paragraph{Foundation models.}
Our experiments include multiple instruction-tuned large language models
across different model families and parameter scales.
Specifically, we evaluate models from the \emph{Qwen} family
(\emph{Qwen-0.5B}, \emph{Qwen-2.5B}, and \emph{Qwen-7B-Instruct~\cite{qwen2025qwen25technicalreport}}),
as well as \emph{LLaMA-3-8B-Instruct~\cite{DBLP:journals/corr/abs-2407-21783}} and \emph{Mistral-7B~\cite{jiang2023mistral7b}}.
This selection enables analysis across both scaling dimensions
and heterogeneous pretraining and instruction-tuning pipelines.

\paragraph{Downstream tasks.}
The evaluated tasks span classification, reasoning, and generation objectives.
They include sentiment classification (\emph{SST-2~\cite{wang2019gluemultitaskbenchmarkanalysis}}),
arithmetic and multi-step reasoning (\emph{GSM8K~\cite{cobbe2021trainingverifierssolvemath}}),
broad multi-domain knowledge evaluation (\emph{MMLU~\cite{hendrycks2021mmlu}}),
abstractive summarization (\emph{SAMSum~\cite{gliwa-etal-2019-samsum}}),
and robustness-oriented truthfulness evaluation (\emph{TruthfulQA~\cite{lin-etal-2022-truthfulqa}}).

The benchmark suite is intended to be representative rather than exhaustive. Complementary stress tests for pre-hoc predictability include statistical reasoning, text-to-SQL and semantic-error detection, cross-document multi-entity question answering, retrieval-augmented generation, and chart question answering \citep{DBLP:conf/nips/ZhuDLL024,DBLP:conf/kdd/LiuSL0L25,DBLP:conf/emnlp/LinLZZLWT25,DBLP:conf/nips/YangSXSBCCGJJKM24,DBLP:conf/emnlp/WuYSW0L24}. These settings are promising candidates for future analysis because they expose heterogeneous sources of uncertainty, including reasoning difficulty, retrieval incompleteness, entity-level attribution errors, and semantic mismatch.

\paragraph{Optimization settings.}
Unless otherwise stated, fine-tuning is performed using AdamW optimizer \citep{loshchilov2019decoupledweightdecayregularization}
with task-specific learning-rate schedules and batch sizes.
We employ LoRA \citep{hu2021loralowrankadaptationlarge} for parameter-efficient fine-tuning (rank=32, alpha=64, dropout=0.1)
across all experiments to ensure computational feasibility at scale.
All hyperparameters are held fixed within each task-model configuration
to isolate variability induced by stochastic optimization.
Experiments are conducted in bf16 precision using the Hugging Face Transformers library \citep{wolf-etal-2020-transformers}.

Regime assignments are determined \emph{post hoc}
based on the estimated governing descriptors
$(\widehat{\alpha}, \widehat{\mathcal{L}}_{\mathcal T,int})$.
They are used exclusively for analysis and visualization
and are not employed to tune probing strategies, predictors, or hyperparameters.

\subsection{Probing Protocol and Randomization}
\label{app:probing_protocol}

For each task, probing is performed by executing partial fine-tuning runs up to a computation budget $c$,
measured in optimization steps.
The probing budget set $\mathcal{C}$ spans early, intermediate, and late stages of fine-tuning.

To isolate optimization-induced variability,
we perform repeated probing runs with identical static configurations
while varying stochastic sources such as random seeds, data ordering, and stochastic framework operations.
Unless otherwise stated, all other hyperparameters are held fixed within each task.

Each task-budget pair is evaluated with $N=1{,}500$ independent runs.
This large sample size is used solely to support \emph{offline identifiability analysis}
and does not reflect requirements for deployment-time usage.

\subsection{Observable Uncertainty Proxies}
\label{app:estimation}

At each probing depth $c$, we compute an observable uncertainty proxy
\begin{equation}
\widehat{U}(c)
=
\frac{1}{N-1}
\sum_{i=1}^N
\left(
R_i(c) - \overline{R}(c)
\right)^2 ,
\qquad
\overline{R}(c)=\frac{1}{N}\sum_{i=1}^N R_i(c),
\end{equation}
where $R_i(c)$ denotes the realized final fine-tuning outcome of the $i$-th probing run.
This proxy reflects dispersion induced by stochastic optimization
and serves as a measurable surrogate for uncertainty relevant to prediction.

We emphasize that $\widehat{U}(c)$ is \emph{not} assumed to equal the theoretical quantity
$\mathcal{V}_{opt}(c)$, but is required only to preserve relative decay behavior across $c$.
To mitigate finite-sample noise, we enforce monotonicity of $\widehat{U}(c)$ with respect to $c$
using isotonic regression, consistent with the information filtration assumption.

\subsection{Predictors Used for Empirical Evaluation}
\label{app:predictor}

All empirical evaluations involve a predictor $f$ that maps the available information at probing depth $c$
to an estimate of the final fine-tuning outcome.
Across all experiments, the predictor architecture and training procedure are held fixed;
only the information sources provided as input vary.

We consider three predictor families:
\begin{itemize}
  \item \textbf{Static-only}: the predictor uses only static descriptors available prior to fine-tuning
  (e.g., dataset statistics and model--data compatibility features).
  \item \textbf{Dynamic-only}: the predictor uses only dynamic signals extracted from partial optimization
  trajectories up to depth $c$ (e.g., early loss curves and gradient statistics).
  \item \textbf{Hybrid}: the predictor combines both static and dynamic features.
\end{itemize}

The predictor is trained to minimize squared error against the final outcome across repeated runs.
Differences in empirical performance therefore reflect information availability rather than model capacity.

\subsection{Estimating Governing Parameters}
\label{app:alpha}

The governing parameters $(\alpha, \mathcal{L}_{\mathcal T,int})$
are estimated by jointly fitting the uncertainty envelope
\begin{equation}
\widehat{U}(c)
\approx
\mathcal{L}_{\mathcal T,int}
+
K c^{-\alpha}
\end{equation}
over a stable probing interval $[c_{\min}, c_{\max}]$.
This joint fitting procedure avoids reliance on saturation at a finite probing depth
and aligns with the identifiability analysis in Appendix~\ref{app:alpha_identifiability}.

The lower bound $c_{\min}$ excludes transient early dynamics,
while $c_{\max}$ corresponds to the largest probing depth considered.

In practice, the lower bound $c_{\min}$ is selected as the smallest probing depth
after which the log--log plot of $\widehat{U}(c)$ exhibits approximately linear behavior.
Operationally, this corresponds to excluding the first one or two probing points.

We verify that estimated decay rates and regime assignments are robust to moderate variations of this choice.

\subsection{Normalization and Aggregation Across Tasks}
\label{app:normalization}

To aggregate results across tasks with different absolute scales,
all uncertainty-related quantities are normalized on a per-task basis.
Specifically, observable uncertainty proxies $\widehat{U}(c)$
are normalized by their value at the smallest probing depth $c_0$ for each task:
\[
\widehat{U}_{\mathrm{norm}}(c) := \frac{\widehat{U}(c)}{\widehat{U}(c_0)}.
\]

The marginal gain shown in aggregated figures is defined as
\[
\Delta(c) := \widehat{U}_{\mathrm{norm}}(c) - \widehat{U}_{\mathrm{norm}}(c+\Delta c),
\]
which measures the reduction in normalized uncertainty achieved by additional probing.
This normalization ensures comparability across tasks while preserving relative decay behavior.
\subsection{Probe-Budget Sensitivity}
\label{app:probe_budget_sensitivity}

To complement the efficiency-frontier analysis in Section~\ref{sec:optimal_probing},
we further examine how regime-level decisions evolve as the probing depth increases.
This analysis is intended to characterize the stability of coarse regime assignments
under different probe budgets, rather than to identify a universally optimal number
of probing steps.

Table~\ref{tab:probe_budget_sensitivity} reports three quantities across increasing
probe depths.
First, regime agreement with the full-fit reference measures the fraction of tasks
whose regime assignment matches the reference assignment obtained from the full
offline fitting protocol.
Second, the observable uncertainty proxy measures the remaining normalized uncertainty
estimated at each probing depth, where lower values indicate more refined uncertainty
estimation.
Third, relative probe cost reports the wall-clock cost normalized so that the
100-step probe has cost $1.0$.

\begin{table}[t]
\centering
\caption{Probe-budget sensitivity analysis.
Regime-level decisions stabilize under moderate probing depth, while deeper probing
mainly refines uncertainty estimates at higher cost.}
\label{tab:probe_budget_sensitivity}
\resizebox{\linewidth}{!}{
\begin{tabular}{lrrrrrr}
\toprule
\textbf{Metric} & \textbf{25} & \textbf{50} & \textbf{75} & \textbf{100} & \textbf{150} & \textbf{200} \\
\midrule
Regime agreement vs. full-fit reference (\%) & 69 & 78 & 85 & 90 & 91 & 90 \\
Observable uncertainty proxy (\%) & 29.8 & 22.1 & 18.7 & 11.5 & 10.2 & 9.7 \\
Relative probe cost (100-step = 1.0) & 0.16 & 0.52 & 0.86 & 1.00 & 1.34 & 1.71 \\
\bottomrule
\end{tabular}
}
\end{table}

The results show a clear stabilization pattern.
Regime agreement increases rapidly from $69\%$ at 25 steps to $90\%$ at 100 steps,
after which deeper probing yields only marginal changes.
In contrast, relative probe cost continues to increase from $1.00$ at 100 steps
to $1.34$ at 150 steps and $1.71$ at 200 steps.
The observable uncertainty proxy also continues to decrease beyond 100 steps,
but at a slower rate.
This suggests that deeper probing can still refine uncertainty estimates, while
its marginal benefit for regime-level decisions becomes limited once the stable
probing interval has been reached.

These results provide an operational interpretation of lightweight probing.
A probe is lightweight not because it uses a fixed universal number of steps,
but because it is short relative to full fine-tuning while being long enough to
recover the regime-relevant structure of uncertainty decay.
In our experiments, moderate probe depths around 100 steps already recover stable
regime-level behavior, whereas substantially deeper probes mainly improve the
precision of the surrogate uncertainty estimate.

\subsection{Supplementary Evidence for Intrinsic Risk Floors}
\label{app:lint}

Figure~\ref{fig:app_lint} plots the observable uncertainty proxy $\widehat{U}(c)$
as a function of probing depth, aggregated by regime.
Noise-Dominant regimes exhibit substantially higher plateaus,
while Static-Sufficient and Dynamic-Critical regimes converge toward lower effective intrinsic floors.
These trends provide empirical consistency evidence for the intrinsic-risk interpretation underlying the phase diagram.

Such effective floors can arise from several data-centric sources studied in prior work, including mislabels, missing or incomplete values, limited redundancy, data scarcity, and broader data-quality pathologies \citep{yang2025imputation,chai2024datascarcity,fan2013dataquality}. These sources do not necessarily vanish with additional optimization steps, and therefore align naturally with the intrinsic-limited interpretation of the Noise-Dominant regime.

\begin{figure}[t]
  \centering
  \includegraphics[width=0.6\linewidth]{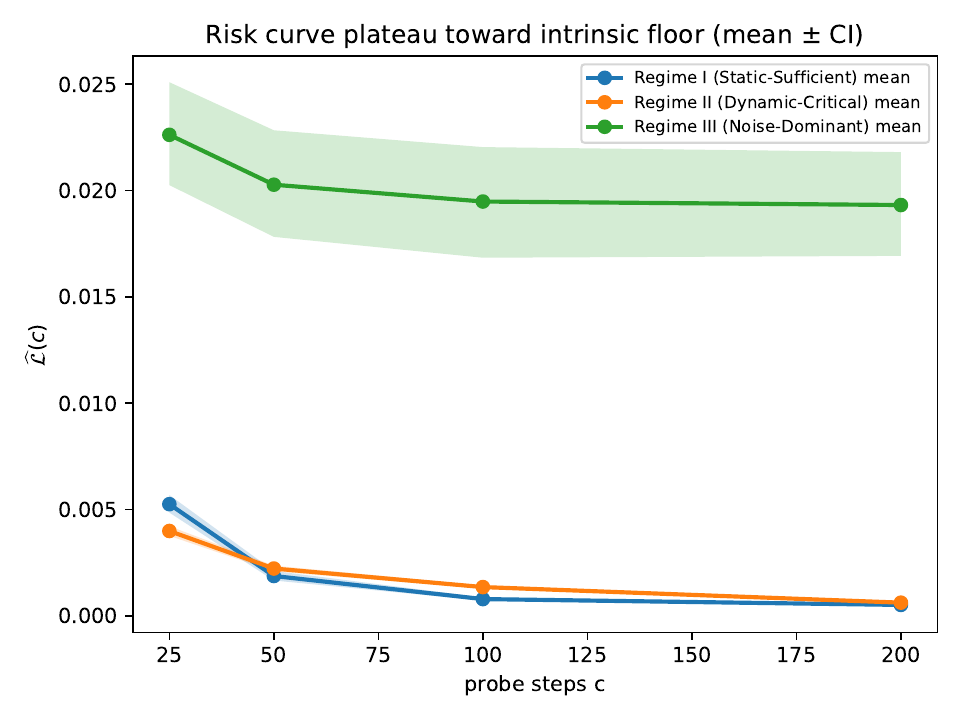}
  \caption{Observable uncertainty decay and effective intrinsic floors across regimes.
  Mean $\widehat{U}(c)$ with confidence intervals is shown as a function of probing depth.
  Noise-Dominant regimes plateau at higher levels, indicating larger intrinsic ambiguity.}
  \label{fig:app_lint}
\end{figure}

\subsection{Population-Level Distribution of Decay Rates}
\label{app:alpha_dist}

Figure~\ref{fig:app_alpha_dist} shows the distribution of estimated decay exponents $\widehat{\alpha}$
across tasks, grouped by regime.
Dynamic-Critical tasks concentrate on lower $\widehat{\alpha}$ values,
while Static-Sufficient tasks exhibit systematically higher decay rates.
This separation supports the interpretation of $\alpha$ as a descriptor of dynamic difficulty
rather than as an artifact of a small number of outliers.

\begin{figure}[h]
  \centering
  \includegraphics[width=0.5\linewidth]{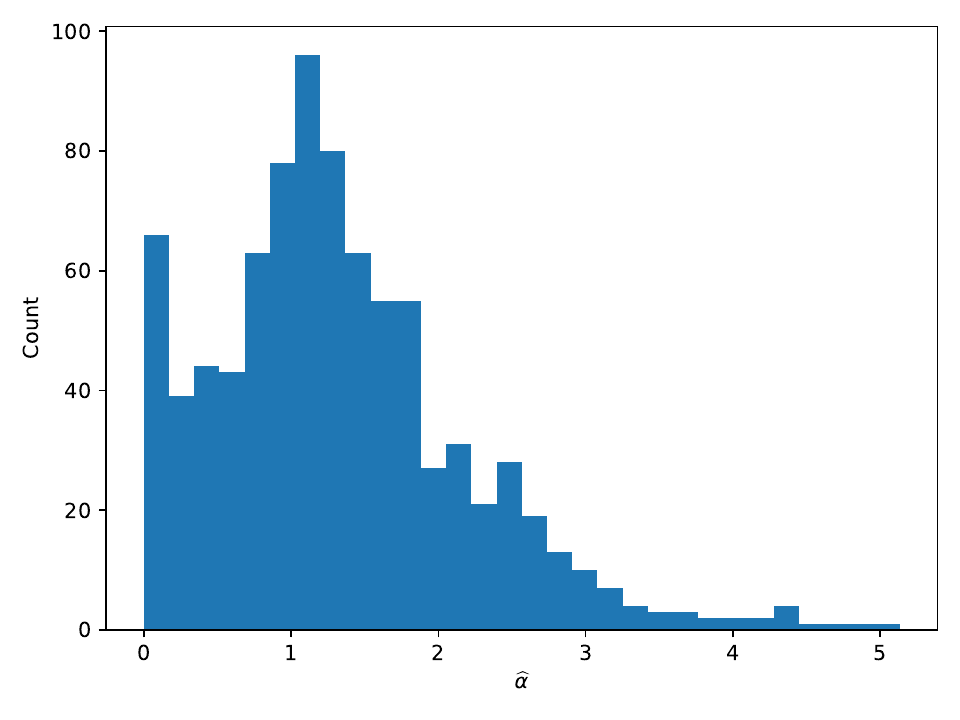}
  \caption{Distribution of estimated uncertainty decay rates across regimes.
  Dynamic-Critical tasks exhibit slower decay, while Static-Sufficient tasks show rapid contraction.}
  \label{fig:app_alpha_dist}
\end{figure}

\subsection{Ablation of Information Sources}
\label{app:ablation}

Figure~\ref{fig:app_ablation} compares predictive performance under a fixed probing budget
for three estimator families: static-only, dynamic-only, and hybrid.
Dynamic information is beneficial primarily in Dynamic-Critical regimes,
while Static-Sufficient and Noise-Dominant regimes show limited gains,
consistent with the regime semantics.

\begin{figure}[h]
  \centering
  \includegraphics[width=0.4\linewidth]{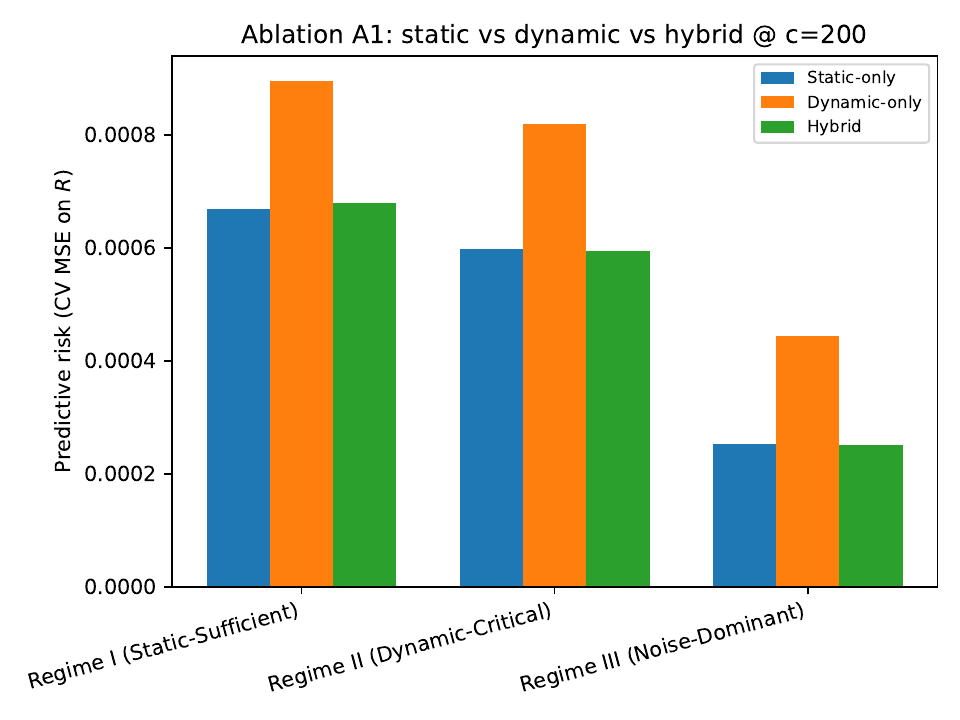}
  \caption{Ablation of information sources under a fixed probing budget.
  Dynamic signals are informative only in Dynamic-Critical regimes.}
  \label{fig:app_ablation}
\end{figure}

\subsection{Hard-Case Localization}
\label{app:hard_cases}

Figure~\ref{fig:app_hard_cases} overlays empirically hard-to-predict cases
on the phase diagram defined by $(\widehat{\mathcal{L}}_{\mathcal T,int},\widehat{\alpha})$.
Hard cases cluster in Noise-Dominant regimes or near regime boundaries,
supporting the interpretation that empirical failures arise from regime mismatch
rather than estimator inadequacy.

\begin{figure}[h]
  \centering
  \includegraphics[width=0.4\linewidth]{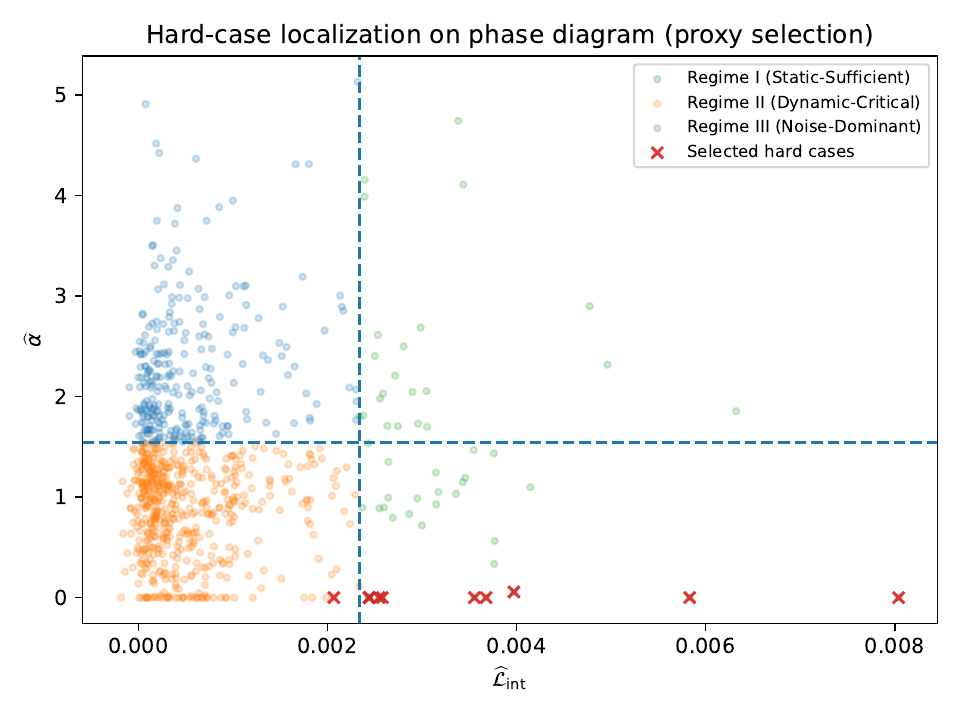}
  \caption{Localization of hard cases on the empirical phase diagram.
  Hard-to-predict tasks cluster in Noise-Dominant regimes or near regime boundaries.}
  \label{fig:app_hard_cases}
\end{figure}


\end{document}